\documentclass{article}

\usepackage[utf8]{inputenc} % allow utf-8 input
\usepackage[T1]{fontenc}    % use 8-bit T1 fonts
\usepackage{hyperref}       % hyperlinks
\usepackage{url}            % simple URL typesetting
\usepackage{booktabs}       % professional-quality tables
\usepackage{amsfonts}       % blackboard math symbols
\usepackage{nicefrac}       % compact symbols for 1/2, etc.
\usepackage{tablefootnote}  % for table footnotes
\usepackage{xcolor}     
\usepackage{multirow}
\usepackage{adjustbox}
\usepackage{fullpage}
\usepackage{wrapfig}
\usepackage{makecell}

\usepackage{fullpage}
\usepackage[protrusion=true,expansion=true]{microtype}

\usepackage{algorithm}
\usepackage{algorithmic}
\usepackage{amsmath}
\usepackage{amsthm}
\usepackage{bm}
\usepackage{amssymb}
\usepackage{cleveref}

\usepackage{amsthm}

\newtheorem{remark}{Remark}
\newtheorem{lemma}{Lemma}
\newtheorem{theorem}{Theorem}

\usepackage{colortbl}
\usepackage{multirow}
\usepackage{caption}
\usepackage{subcaption}
\usepackage{times}
\usepackage{natbib}
\usepackage{bm}
\usepackage{pifont}% 

\title{Step-level Denoising-time Diffusion Alignment with Multiple Objectives}

\author{%
    Qi Zhang\thanks{Qi Zhang, Dawei Wang and Shaofeng Zou are with the School of Electrical, Computer and Energy Engineering, Arizona State University, Tempe, AZ 85281 USA (e-mail: \href{mailto:qzhan261@asu.edu}{qzhan261@asu.edu}, \href{mailto:dwang201@asu.edu}{dwang201@asu.edu}, \href{mailto:zou@asu.edu}{zou@asu.edu}). } 
   \quad
   Dawei Wang\footnotemark[1] 
   \quad
 Shaofeng Zou\footnotemark[1] 
}

\begin{document}

\maketitle

\begin{abstract}
Reinforcement learning (RL) has emerged as a powerful tool for aligning diffusion models with human preferences, typically by optimizing a single reward function under a KL regularization constraint. In practice, however, human preferences are inherently pluralistic, and aligned models must balance multiple downstream objectives, such as aesthetic quality and text-image consistency. Existing multi-objective approaches either rely on costly multi-objective RL fine-tuning or on fusing separately aligned models at denoising time, but they generally require access to reward values (or their gradients) and/or introduce approximation error in the resulting denoising objectives. In this paper, we revisit the problem of RL fine-tuning for diffusion models and address the intractability of identifying the optimal policy by introducing a \emph{step-level} RL formulation. Building on this, we further propose \textbf{Multi-objective Step-level Denoising-time Diffusion Alignment} (MSDDA), a retraining-free framework for aligning diffusion models with multiple objectives,  obtaining the optimal reverse denoising distribution in closed form, with mean and variance expressed directly in terms of single-objective base models. We prove that this denoising-time objective is exactly equivalent to the step-level RL fine-tuning, introducing no approximation error. Moreover, we provide numerical results, which indicate our method outperforms existing denoising-time approaches.
\end{abstract}

\section{Introduction}
Diffusion models~\cite{ramesh2022hierarchical,rombach2022high,saharia2022photorealistic} have gained increasing attention in text-to-image generation. However, these models are pre-trained on broad, large-scale datasets and are therefore not tailored to specific downstream tasks. To adapt them to particular application domains, fine-tuning approaches such as supervised fine-tuning~\cite{lee2023aligning} and reinforcement learning (RL) based methods~\cite{black2023training,clark2023directly,fan2023dpok} have been proposed. In RL-based fine-tuning, the goal is to maximize a given reward function by updating a pre-trained model, typically with an additional Kullback–Leibler (KL) regularization term to keep the aligned model close to the pre-trained one.

Despite their success, most RL fine-tuning methods optimize with respect to a \emph{single} reward function. In practice, human preferences are inherently pluralistic; thus, the alignment should balance multiple downstream objectives, such as aesthetic quality and text–image consistency. To address this problem, a multi-objective setting is investigated, in which we have a set of reward functions, and a task-specific reward is given by a preference-weighted combination of these rewards with respect to a weight vector $w$.

Existing work on multi-objective RL can be applied to diffusion fine-tuning. However, these methods require substantial additional computation, such as fine-tuning a large (often exponential in the size of the reward set) number of models to cover the space of preference weights~\cite{rame2023rewarded,zhou2022anchor,yang2019generalized} or solving for conflict-avoiding update directions~\cite{wang2025theoretical}. To improve training efficiency, denoising-time diffusion alignment has been studied, which avoids training new models and instead fuses the denoising processes of existing aligned models to realize a target output distribution. Although these methods avoid extra training, most of them still require access to reward gradients or repeated estimation of value functions by generating many samples with associated rewards~\cite{han2023training,kim2025test,ye2024tfg,singh2025code}. To the best of our knowledge, only a few recent works~\cite{cheng2025diffusion,anonymous2025deradiff} avoid using reward information and obtain a target model by fusing aligned models corresponding to a set of base rewards; however, the derivation of their denoising-time objectives introduces approximation errors that are difficult to quantify. 

In this paper, we aim to address the following question:
\emph{Can we design a retraining-free denoising-time alignment method that does not require access to individual reward functions and introduces no additional approximation error?}

We provide fundamental insights into denoising-time alignment and answer the above question firmly. Our contributions are summarized as follows:
\begin{itemize}
    \item We begin by revisiting existing RL fine-tuning methods for diffusion models, which require sampling from the updated policy. This dependence makes the target policy difficult to track and forces analyses to rely on approximations whose errors are hard to quantify. To overcome this intractability,
    we propose a novel  \emph{step-level} RL fine-tuning formulation. 
    Moreover, we derive the corresponding step-level DPO objective, which trains the model solely from preference pairs and does not require explicit access to the reward function.
    \item Building on our step-level RL formulation, we design a step-level denoising-time diffusion alignment method for multiple objectives. Without any additional training or access to reward functions, our algorithm can, for any preference weight vector $w$, compute the optimal reverse denoising distribution in closed form: both its mean and variance are explicit functions of those of the base reward models. We further show theoretically that the solution obtained from our algorithm is exactly equivalent to that of the step-level RL fine-tuning formulation, and therefore introduces \emph{no} additional approximation error.
    \item We conduct extensive experiments using Stable Diffusion \cite{rombach2022high} as the pre-trained model, considering multiple reward functions and a wide range of preference weights $w$. The results demonstrate that our method outperforms existing denoising-time approaches.
\end{itemize}

\section{Related Work}
\begin{figure*}
    \centering
    \includegraphics[width=0.8\linewidth]{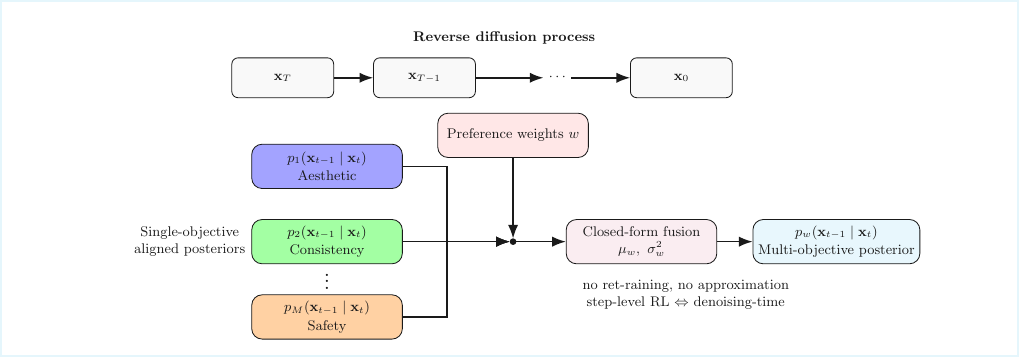}
    \caption{Overview of our proposed Multi-objective Step-level Denoising-time Diffusion Alignment (MSDDA) algorithm.}
    \label{fig:placeholder}
\end{figure*}
\textbf{Single-objective diffusion alignment:} A large number of works study aligning diffusion models to human preferences under a \emph{single} reward. For example, supervised fine-tuning (SFT) methods~\cite{lee2023aligning,wu2023human} improve output quality by minimizing a reward-weighted negative log-likelihood on a fixed dataset. These approaches are entirely offline, with training samples provided in advance rather than generated by the current model.  

RL-based methods such as DDPO~\cite{black2023training} and DPOK~\cite{fan2023dpok} instead formulate the reverse diffusion process as a $T$-horizon Markov Decision Process (MDP), and optimize a terminal-state reward with a KL regularization term to keep the aligned model close to the pre-trained one. Compared with the above methods, DRaFT~\cite{clark2023directly} backpropagates the reward gradient to update the diffusion model. Rather than modifying the diffusion model itself, \cite{hao2023optimizing} uses RL to optimize prompts to improve model performance.  Building on the RL formulation, diffusion DPO~\cite{wallace2024diffusion} can be viewed as a special case: it does not require explicit reward modeling and on-policy sampling, and instead trains the model directly from human preference pairs.

\textbf{Generic multi-objective alignment:} 
While the above works focus on a single objective, a straightforward strategy for multi-objective diffusion alignment is to use linear scalarizations~\cite{roijers2013survey,yang2019generalized,zhou2022anchor,rame2023rewarded}: many models are trained, each corresponding to a distinct preference weight $w$, and for a given user, one selects the model whose weight is closest to the user's preference. To adequately cover the space of preference weights, however, the number of models must be exponential in the number of objectives, making this approach computationally prohibitive. In our framework, the number of aligned models is exactly the same as the number of reward functions, which is much smaller than the ones in these multi-objective RL schemes.

MGDA-based approaches~\cite{desideri2012multiple}  can also be directly applied to SFT~\cite{chen2023three,xiao2023direction,zhang2025mgda} and RL fine-tuning~\cite{wang2025theoretical}. At each update, a conflict-avoiding descent direction that improves all objectives simultaneously is calculated. However, computing this direction introduces nontrivial overhead, and these methods treat diffusion models as general machine learning models, without exploiting the structure of the reverse diffusion process. In contrast, our method only requires single-objective alignment for each base reward, introduces no extra computations such as conflict-avoidant direction solving, and needs no training when adapting to new preference weights at denoising time.

\textbf{LLM decoding-time alignment:}
There have been many recent studies in decoding-time alignment for LLMs~\cite{shi2024decoding,zeng2024token,liu2024decoding}, where multiple alignment policies are fused at inference time to satisfy multi-objective preferences. These methods, however, are designed for auto-regressive token generation and do not directly transfer to the Gaussian-structured denoising dynamics of diffusion models.

\textbf{Diffusion denoising-time alignment:}
\begin{table*}[t]
  \caption{Comparison of existing diffusion denoising-time alignment methods, all of which are training-free.  $\surd$ indicates the term is required and $\times$ indicates the term is not required.}
  \label{tab:comparison}
  \begin{center}
    \begin{small}
      \begin{sc}
        \begin{tabular}{lcccr}
          \toprule
          Method  & Reward Gradient        & Reward Function     & Approximation Error  \\
          \midrule
          MUDM \cite{han2023training}    & $\surd$ &$\surd$ & NA \\
          TFG \cite{ye2024tfg}    & $\surd$ &$\surd$ & NA \\
          DAS \cite{kim2025test}   &  $\times$  &$\surd$ & NA \\
          CoDE\cite{singh2025code}     &  $\times$  &$\surd$&NA \\
          RS\cite{rafailov2023direct}     & $\times$ &$\times$ &NA \\
          DB-MPA \cite{cheng2025diffusion}& $\times$ &$\times$  & $\surd$  \\
          DERADIFF \cite{anonymous2025deradiff}& $\times$ &$\times$ & $\surd$  \\
          \textbf{This paper}   & $\times$ &$\times$  & $\times$ \\
          \bottomrule
        \end{tabular}
      \end{sc}
    \end{small}
  \end{center}
  \vskip -0.1in
\end{table*}
%\textbf{Diffusion denoising-time alignment:}
%\begin{table*}[t]
%  \caption{Comparison of existing diffusion denoising-time alignment methods, all of which are training-free.  $\surd$ indicates the term is required and $\times$ indicates the term is not required.}
%  \label{tab:comparison}
%  \begin{center}
%    \begin{small}
%      \begin{sc}
%        \begin{tabular}{lccccr}
%          \toprule
%          Method  & Reward Gradient        & Reward Function  & RL Formulation    & Approximation Error  \\
%          \midrule
%          MUDM \cite{han2023training}    & $\surd$ &$\surd$& NA & NA \\
%          TFG \cite{ye2024tfg}    & $\surd$ &$\surd$& Eq.~\eqref{eq:rlhf1} & NA \\
%          DAS \cite{kim2025test}   &  $\times$  &$\surd$& Eq.~\eqref{eq:rlhf1} & NA \\
%          CoDE\cite{singh2025code}     &  $\times$  &$\surd$& NA & NA \\
%         RS\cite{rafailov2023direct}     & $\times$ &$\times$ & NA & NA \\
%          DB-MPA \cite{cheng2025diffusion}& $\times$ &$\times$ & Eq.~\eqref{eq:rlhf1} & $\surd$  \\
%          DERADIFF \cite{anonymous2025deradiff}& $\times$ &$\times$ & Eq.~\eqref{eq:rlhf1} & $\surd$  \\
%          \textbf{This paper}   & $\times$ &$\times$ & Eq.~\eqref{eq:rlhf3} & $\times$ \\
%          \bottomrule
%        \end{tabular}
%      \end{sc}
%    \end{small}
%  \end{center}
%  \vskip -0.1in
%\end{table*}
Similar to LLM decoding-time alignment, recent work proposes retraining-free approaches for multi-objective diffusion alignment by modifying the denoising process:

\emph{(1) Reward gradient-based methods.}
For single-objective alignment, several studies~\cite{Yu_2023_ICCV,pmlr-v202-song23k,bansal2023universal,ye2024tfg} use the gradient of a reward function to guide the denoising trajectory toward target regions of the output space. These methods extend naturally to multi-objective settings by using a weighted reward~\cite{han2023training,kim2025test,ye2024tfg}. However, they require differentiable reward models and additional computation to evaluate reward gradients.

\emph{(2) Reward value-based methods.}
To avoid differentiability requirements, other approaches~\cite{kim2025test,li2024derivative,singh2025code} rely on reward values only rather than gradients, adjusting the denoising process based on scalar feedback. While they remove the need for the reward gradient, both value-based and gradient-based methods still require access to the reward function and repeated estimation of reward expectations, typically via Tweedie-based estimators~\cite{kadkhodaie2021stochastic} or Monte Carlo sampling.

\emph{(3) Reward-free fusion methods.}
To eliminate dependence on reward access and additional sample generation, some works require only fusing pre-aligned models. Reward Soup (RS)~\cite{rame2023rewarded} linearly interpolates model parameters rather than their reverse conditional distributions. Although simple, RS requires all aligned models to share the same architecture and lacks theoretical guarantees, e.g., \cite{shi2024decoding} shows that RS fails to generate meaningful responses for LLM tasks.  

Closer to our work, recent reward-free methods such as DB-MPA~\cite{cheng2025diffusion} and DERADIFF~\cite{anonymous2025deradiff} combine the reverse distributions of multiple aligned models. These methods are typically derived from fine-tuning methods whose optimal policy is intractable, and to obtain a tractable denoising-time objective, they omit several terms in the diffusion dynamics, introducing approximation errors that are difficult to quantify. By contrast, our step-level RL fine-tuning formulation resolves this intractability, allowing us to derive an exact closed-form denoising-time objective without dropping any terms. To better illustrate the novelty and advantages of our approach, we provide a detailed comparison with existing denoising-time alignment methods in \Cref{tab:comparison}.

\section{Background}
\subsection{Diffusion model}
Diffusion models \cite{pmlr-v37-sohl-dickstein15, ho2020denoising} are latent variable models that can be expressed by $p_\theta(\mathbf{x}_0)
  := \int p_\theta(\mathbf{x}_{0:T}) \, d\mathbf{x}_{1:T}$, where $\theta$ is the model parameter, $\mathbf{x}_0$ follows the data distribution $\mathbf{x}_0 \sim q(\mathbf{x}_0)$ and $\mathbf{x}_1, \dots, \mathbf{x}_T$ are the latents with the same dimensionality as $\mathbf{x}_0$. 
  There are two processes in the diffusion model: the forward process and the reverse process. The goal is to learn a reverse model $p_\theta(\mathbf{x}_0)$ that approximates the data distribution $q(\mathbf{x}_0)$.
  During each step $t$ of the forward process, Gaussian noise is added with a variance schedule $\beta_t$:
  \begin{flalign}
  q(\mathbf{x}_{1:T}|\mathbf{x}_0)&:=\prod_{t=1}^Tq(\mathbf{x}_{t}|\mathbf{x}_{t-1}), \nonumber
  \\q(\mathbf{x}_{t}|\mathbf{x}_{t-1})&:=\mathcal N(\mathbf{x}_{t}; \sqrt{1-\beta_t}\mathbf{x}_{t-1}, \beta_t \mathbf{I} ),\nonumber
  \end{flalign}
  where $\mathcal N$ denotes Gaussian distribution and $\mathbf{I}$ indicates the identity matrix. In the reverse process, a denoising neural network is trained to remove the noise. Specifically, the reverse starts at $p_\theta(\mathbf{x}_{T})=\mathcal N(\mathbf{x}_{T};\mathbf{0}, \mathbf{I})$ and follows that
  \begin{flalign}
p_\theta(\mathbf{x}_{0:T})&:=p_\theta(\mathbf{x}_{T})\prod_{t=1}^Tp_\theta(\mathbf{x}_{t-1}|\mathbf{x}_{t}), \nonumber\\
     p_\theta(\mathbf{x}_{t-1}|\mathbf{x}_{t})&:=\mathcal N(\mathbf{x}_{t-1}; \mathbf{\mu_\theta}(\mathbf{x}_{t}, t), \mathbf{\Sigma_\theta}(\mathbf{x}_{t}, t)).\nonumber 
  \end{flalign}

\subsection{RL fine-tuning for diffusion}\label{sec:rlhf}
The pre-training of the diffusion model focuses solely on matching the data distribution $q(\mathbf{x}_0)$ and does not explicitly incorporate human preferences. To better align the diffusion model with downstream applications such as aesthetics, text–image alignment, and safety, RL-based fine-tuning is widely leveraged. Concretely, the RL method aims to maximize a given reward function: $\mathbb E_{\mathbf{x}_{0} \sim p_\theta(\mathbf{x}_{0})}[r(\mathbf{x}_{0})]$. However, as noted by \cite{fan2023dpok}, this objective leads to potential over-optimization problems. Following RL fine-tuning methods in LLM tasks \cite{ouyang2022training}, a KL regularization term is added to the RL objective to keep the aligned model close to the pre-trained one:
\begin{flalign}\label{eq:rlhf1}
    \max_{p_\theta} \mathbb E_{\mathbf{x}_{0} \sim p_\theta(\mathbf{x}_{0})}[r(\mathbf{x}_{0})]-\lambda \mathbb D_{\mathrm{KL}}[p_\theta(\mathbf{x}_{0}) ||p_{\mathrm{pre}}(\mathbf{x}_{0})],
\end{flalign}
where $\lambda>0$ is a pre-defined hyper-parameter, $p_{\text{pre}}$ denotes the pre-training model and $\mathbb D_{\mathrm{KL}}$ is the KL divergence. In practice, typically it is hard to track the marginal distribution of $p_\theta(\mathbf{x}_{0})$ and further evaluate the KL term, and then the marginal KL term is replaced by the step-level KL \cite{fan2023dpok,uehara2024understanding}:
\begin{flalign} \label{eq:rlhf2}
    \max_{p_\theta} \mathbb E_{\mathbf{x}_{0:T}\sim p_\theta}\left[r(\mathbf{x}_{0})-\lambda \sum_{t=1}^T\mathbb D_{\mathrm{KL}}[p_\theta^t(\cdot|\mathbf{x}_{t}) ||p_{\mathrm{pre}}^t(\cdot|\mathbf{x}_{t})]\right],
\end{flalign}
where $p^t(\mathbf{x}_{t-1}|\mathbf{x}_{t})$ denotes the reverse conditional distribution. 
%We refer to both Eq.~\eqref{eq:rlhf1} and Eq.~\eqref{eq:rlhf2} as \emph{trajectory-level} RL objectives, because the reward is defined only at the terminal state (not decomposed across denoising steps) and the optimization is carried out over the full trajectory $\mathbf{x}_{0:T}$.
Typically, the RL fine-tuning objective is implemented as a KL-regularized Markov Decision Process (MDP), which we will discuss in detail in Section \ref{sec:single}.

 \subsection{Multi-objective reward model}
 In standard RL fine-tuning, the model is typically optimized with respect to one single reward function. As a result, the performance of this aligned model may degrade significantly when evaluated by another reward function. To address this problem, we consider the following multi-objective setting \cite{yang2019generalized,zhou2022anchor,shi2024decoding}, which assumes that there exists a set of reward functions $\{r_i\}_{i=1}^M$, representing $M$ distinct objectives. A normalized vector $w \in \Delta^{M-1}$ is used to represent the human preference among different objectives, where $\Delta^{M-1}$ denotes the probability simplex. For a user with preference vector $w$, the goal is to maximize the RL fine-tuning objective under the weighted reward function $r^w=\sum_{i=1}^Mw_ir_i$. Although fine-tuning with this weighted reward can yield strong performance, it is impractical to retrain a separate model for each possible preference $w$.
 
To overcome this limitation, denoising-time alignment is studied, where the core idea is to maximize the weighted reward exclusively through denoising by combining the reverse conditional distributions of a set of existing single-objective aligned diffusion models. For any preference vector $w$, denoising-time alignment requires no additional training: it directly runs the diffusion reverse process using the combined reverse conditionals.

\section{Main Results}\label{sec:method}
 \subsection{Step-level RL fine-tuning} \label{sec:single}
 As noted in Section~\ref{sec:rlhf}, $p_\theta(\mathbf{x}_{0})$ is usually intractable to evaluate for diffusion models, and therefore in Eq.~\eqref{eq:rlhf2},  the intractable marginal is replaced with step-level KL terms to keep each aligned reverse conditional near the pretrained model.
However, optimizing either Eq.~\eqref{eq:rlhf1} or Eq.~\eqref{eq:rlhf2} requires drawing data from the updated policy $\pi_\theta$; consequently, the optimal policy depends on advantages computed under that same (unknown) policy, which makes the target policy hard to track. Therefore, existing denoising-time methods based on Eq.~\eqref{eq:rlhf1} and Eq.~\eqref{eq:rlhf2} rely on approximations whose errors are difficult to quantify \cite{anonymous2025deradiff,cheng2025diffusion}. Trust Region Policy Optimization (TRPO)~\cite{schulman2015trust} resolves this circular dependence between the policy and advantage estimates by optimizing a surrogate objective that can be evaluated using data from a fixed reference policy. This surrogate admits provable trust-region improvement and yields an optimal update that depends only on advantages under the reference policy, which can be estimated and tracked reliably.
 Inspired by TRPO~\cite{schulman2015trust,zeng2024token}, we therefore reformulate the problem as a step-level RL objective, applying the decomposition to both the reward (via step-wise advantages) and the KL terms.

Recall the $T$-horizon MDP formulation for single-objective diffusion models~\cite{black2023training,fan2023dpok}. Let $s_t=\mathbf{x}_{T-t}$ denote the state and $a_t=\mathbf{x}_{T-t-1}$ the action. The initial state distribution is $P_0(s_0)=p_\theta(\mathbf{x}_{T})=\mathcal N(\mathbf{0,I})$. The dynamics are deterministic: $P(s_{t+1}|s_t,a_t)=\delta_{a_t}$, where $\delta_{z}$ denotes the Dirac distribution at $z$.  The reward function is non-zero only at the final step:
\begin{flalign}
    R(s_t, a_t) =
\begin{cases}
r(s_T), & t = T-1,\\
0, & \text{otherwise}.
\end{cases}\nonumber
\end{flalign}
Finally, the policy can be expressed as $\pi_\theta(a_t|s_t)=p_\theta(\mathbf{x}_{T-t-1}|\mathbf{x}_{T-t})$. 
Define the following value functions:
\begin{flalign}
    Q_\pi(s_t, a_t)
  &= \mathbb{E}_{\pi}\bigl[ r(s_T) \mid s_t, a_t \bigr],\nonumber\\
  V_\pi(s_t)
  &= \mathbb{E}_{ \pi}
    \bigl[ Q_\pi(s_t, a_t) \mid s_t \bigr],\nonumber\\
    A_\pi(s_t, a_t) &= Q_\pi(s_t, a_t) - V_\pi(s_t),\nonumber
\end{flalign}
where $Q_\pi$ is the state--action value function under policy $\pi$, and $V_\pi$ and $A_\pi$ are the corresponding value and advantage functions.

We then introduce our step-level RL objective. Motivated by the TRPO approach \citep{schulman2015trust}, we apply the decomposition to both the reward (via step-wise advantages) and the KL terms:
\begin{flalign}\label{eq:rlhf3}
  \max_{\pi_\theta} \quad&\mathbb{E}_{s_t \sim \mathcal D, z \sim \pi_\theta(\cdot \mid s_t)}
      \bigl[ A_{\pi_{\mathrm{pre}}}(s_t, z)\nonumber\\ 
    &- \lambda\mathbb{D}_{\mathrm{KL}}\bigl(
        \pi_\theta(\cdot \mid s_t)
        \,\|\, 
        \pi_{\mathrm{pre}}(\cdot \mid s_t)
      \bigr)\bigr],
\end{flalign}
where $\mathcal{D}$ is the dataset and $\pi_{\mathrm{pre}}$ is the policy induced by the pre-trained model. Compared with Eq.~\eqref{eq:rlhf2}, Eq.~\eqref{eq:rlhf3} replaces the terminal-state reward with the advantage function under the pre-trained policy $\pi_{\mathrm{pre}}$. Furthermore, Lemma~4.1 in \cite{zeng2024token} shows that maximizing Eq.~\eqref{eq:rlhf3} leads to policy improvement in expectation.

\cite{wallace2024diffusion} studies an RL fine-tuning variant that replaces the marginal KL regularizer $\mathbb D_{\mathrm{KL}}[p_\theta(\mathbf{x}_{0}) ||p_{\mathrm{pre}}(\mathbf{x}_{0})]$ in Eq.~\eqref{eq:rlhf1} with the trajectory-level KL regularizer $\mathbb D_{\mathrm{KL}}[p_\theta(\mathbf{x}_{0:T}) ||p_{\mathrm{pre}}(\mathbf{x}_{0:T})]$. 
Based on this formulation, a DPO-style method for diffusion models has been proposed, which directly aligns the model to human preference data. For our step-level RL formulation, in the following lemma, we also provide a corresponding DPO objective.
Let $\alpha_t := 1-\beta_t$ and $\bar{\alpha}_t := \prod_{i=1}^t \alpha_i$.  
Let $\mathbf{x}_0^w$ and $\mathbf{x}_0^l$ denote the winning and losing samples, respectively.  
For $*\in\{w,l\}$, define the forward diffusion process
\[
\mathbf{x}_t^{*}
= \sqrt{\bar{\alpha}_t}\,\mathbf{x}_0^{*}
+ \sqrt{1-\bar{\alpha}_t}\,\epsilon^{*},
\qquad
\epsilon^{*} \sim \mathcal{N}(\mathbf{0},\mathbf{I}),
\]
so that $\mathbf{x}_t^{*} \sim q(\mathbf{x}_t^{*}\mid \mathbf{x}_0^{*})$.
The corresponding signal-to-noise ratio is
\[
\lambda_t := \frac{\bar{\alpha}_t}{1-\bar{\alpha}_t}.
\]
\begin{lemma}\label{lemma:DPO}
The step-level DPO objective derived from Eq.~\eqref{eq:rlhf3} can be written as
\begin{flalign}
-\, &\mathbb{E}_{(\mathbf{x}_0^w,\mathbf{x}_0^l)\sim \mathcal{D},\,
t\sim \mathcal{U}(1,T),\,
\mathbf{x}_t^w\sim q(\mathbf{x}_t^w \mid \mathbf{x}_0^w),\,
\mathbf{x}_t^l\sim q(\mathbf{x}_t^l \mid \mathbf{x}_0^l)} \nonumber\\
&\Bigg[
\log \sigma\Bigg(
-\lambda T \,\omega(\lambda_t)\,
\Big(
\Delta_{\theta}^{w}
- \Delta_{\theta}^{l}
- \Delta_{\mathrm{diff}}
\Big)
\Bigg)
\Bigg],
\end{flalign}
where $\sigma(z)=\frac{1}{1+\exp(-z)}$ is the sigmoid function, $\mathcal U(1,T)$ denotes a  uniform distribution over $\{1,\cdots,T\}$ and
\begin{align}
\Delta_{\theta}^{*}
:&= \|\epsilon^{*}-\epsilon_\theta(\mathbf{x}_t^{*},t)\|_2^2
   - \|\epsilon^{*}-\epsilon_{\mathrm{pre}}(\mathbf{x}_t^{*},t)\|_2^2, \nonumber\\
\Delta_{\mathrm{diff}}
:&= \|\epsilon_\theta(\mathbf{x}_t^w,t)-\epsilon_{\mathrm{pre}}(\mathbf{x}_t^w,t)\|_2^2\nonumber\\
   &- \|\epsilon_\theta(\mathbf{x}_t^l,t)-\epsilon_{\mathrm{pre}}(\mathbf{x}_t^l,t)\|_2^2, \nonumber
\end{align}
$\omega(\lambda_t)$ denotes a weighting function (often chosen to be constant in practice~\cite{ho2020denoising,kingma2021variational}), and
$\epsilon_*(\mathbf{x}_t,t)$ is a function approximator predicting the noise term such that the corresponding reverse-process mean is given by
\[
\mu_*(\mathbf{x}_t,t)
= \frac{1}{\sqrt{\alpha_t}}
\left(
\mathbf{x}_t
- \frac{\beta_t}{\sqrt{1-\bar{\alpha}_t}}\,
\epsilon_*(\mathbf{x}_t,t)
\right).
\]
\end{lemma}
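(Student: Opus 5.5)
We follow the derivation of Diffusion-DPO \cite{wallace2024diffusion}, but start from the step-level objective Eq.~\eqref{eq:rlhf3} and mirror the token-level DPO argument of \cite{zeng2024token}.

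\textbf{Step 1 (closed-form optimal policy).} For each fixed state $s_t$, Eq.~\eqref{eq:rlhf3} is a KL-regularized linear objective in $\pi_\theta(\cdot\mid s_t)$. Its maximizer is the Gibbs policy
\[
\pi^*(z\mid s_t)=\frac{1}{Z(s_t)}\,\pi_{\mathrm{pre}}(z\mid s_t)\exp\!\big(A_{\pi_{\mathrm{pre}}}(s_t,z)/\lambda\big).
\]
Inverting this gives
\[
A_{\pi_{\mathrm{pre}}}(s_t,z)=\lambda\log\frac{\pi^*(z\mid s_t)}{\pi_{\mathrm{pre}}(z\mid s_t)}+\lambda\log Z(s_t).
\]

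\textbf{Step 2 (reward from advantages).} The MDP has deterministic transitions and only a terminal reward, so the advantages telescope:
\[
\sum_{t=0}^{T-1}A_{\pi_{\mathrm{pre}}}(s_t,a_t)=r(\mathbf{x}_0)-V_{\pi_{\mathrm{pre}}}(\mathbf{x}_T).
\]
Substituting Step 1 expresses $r(\mathbf{x}_0)$ as $\lambda\sum_t\log\frac{\pi^*}{\pi_{\mathrm{pre}}}$ plus terms depending on the partition functions and on $V(\mathbf{x}_T)$. We plug this into the Bradley--Terry likelihood $\sigma(r(\mathbf{x}_0^w)-r(\mathbf{x}_0^l))$. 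The normalizers do not cancel automatically, because they depend on the intermediate states, which differ between the winning and losing chains. We handle them as \cite{zeng2024token} does: we keep the remaining difference as a sequential-KL correction. Along trajectories drawn from $\pi^*$, the expectation of $\log Z(s_t)$ minus the expected advantage under $\pi_{\mathrm{pre}}$ (which is zero) equals $\mathbb{D}_{\mathrm{KL}}(\pi^*\|\pi_{\mathrm{pre}})$ at $s_t$. The pair-level preference logit therefore becomes
\[
\lambda\sum_t\Big[\log\tfrac{\pi_\theta}{\pi_{\mathrm{pre}}}(w)-\log\tfrac{\pi_\theta}{\pi_{\mathrm{pre}}}(l)\Big]-\lambda\sum_t\Big[\mathbb{D}_{\mathrm{KL}}(w)-\mathbb{D}_{\mathrm{KL}}(l)\Big].
\]
The second bracket is the origin of $\Delta_{\mathrm{diff}}$.

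\textbf{Step 3 (diffusion specialization).} As in Diffusion-DPO, the reverse chains $\mathbf{x}_{1:T}$ are unavailable, so we replace them by the forward posterior $q(\mathbf{x}_{1:T}\mid\mathbf{x}_0)$. We move the expectation outside $\log\sigma$ using Jensen's inequality, since $-\log\sigma$ is convex, which yields an upper bound. We write the sum over $t$ as $T\,\mathbb{E}_{t\sim\mathcal U(1,T)}$. The log-ratio term at step $t$ is then bounded by the difference $\mathbb{D}_{\mathrm{KL}}(q(\mathbf{x}_{t-1}\mid\mathbf{x}_t,\mathbf{x}_0)\|p_\theta)-\mathbb{D}_{\mathrm{KL}}(q\|p_{\mathrm{pre}})$. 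For Gaussians with shared variance, each such KL equals $\omega(\lambda_t)\|\epsilon-\epsilon_*\|^2$ up to a constant under the $\epsilon$-parametrization of $\mu_*$; this produces $\Delta_\theta^w-\Delta_\theta^l$, with the minus sign coming from the KL orientation. The step-level KL between two Gaussian policies with equal variances is proportional to $\|\mu_\theta-\mu_{\mathrm{pre}}\|^2$, which is $\omega(\lambda_t)\|\epsilon_\theta-\epsilon_{\mathrm{pre}}\|^2$ after the same reparametrization. Evaluating it at $\mathbf{x}_t^w$ and $\mathbf{x}_t^l$ gives $\Delta_{\mathrm{diff}}$, and collecting terms yields the stated objective.

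\textbf{Main obstacle.} The hardest part is Step 2: justifying that the normalizers and value baselines reduce exactly to the KL-difference term. This needs the \cite{zeng2024token} identity that the expected advantage under $\pi_{\mathrm{pre}}$ vanishes. It also needs a careful check that the constant-variance Gaussian KL uses the same weighting $\omega(\lambda_t)$ as the denoising terms. We intend to absorb all scaling constants into $\omega(\lambda_t)$, which is how we expect the clean form in the statement to arise.
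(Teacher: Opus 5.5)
Your skeleton is the paper's: invert the Gibbs form, telescope with $V_{\pi_{\mathrm{pre}}}(s_T)=0$, apply Bradley--Terry, use the TDPO device to turn the normalizers into a KL correction, then Jensen, the forward posterior, and $\epsilon$-parametrized Gaussian KLs. However, Step~2, the step you yourself flagged, is stated incorrectly, and the error reaches the final formula.

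The identity holds pointwise in $s_t$ and needs no averaging over trajectories. Take $\mathbb{E}_{z\sim\pi_{\mathrm{pre}}(\cdot\mid s_t)}$ of your inverted relation $A_{\pi_{\mathrm{pre}}}(s_t,z)=\lambda\log\frac{\pi^*(z\mid s_t)}{\pi_{\mathrm{pre}}(z\mid s_t)}+\lambda\log Z(s_t)$, and use $\mathbb{E}_{\pi_{\mathrm{pre}}}[A_{\pi_{\mathrm{pre}}}(s_t,\cdot)]=0$. The paper does the equivalent thing: it writes $A=Q-\mathbb{E}_{a\sim\pi_{\mathrm{pre}}}Q$, substitutes $Q=\lambda\log\frac{p_\theta}{p_{\mathrm{pre}}}+\lambda\log Z$ in both places, and $\log Z$ cancels. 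Either way, $\log Z(s_t)=\mathbb{D}_{\mathrm{KL}}(\pi_{\mathrm{pre}}(\cdot\mid s_t)\,\|\,\pi^*(\cdot\mid s_t))$, so
\[
A_{\pi_{\mathrm{pre}}}(s_t,a_t)=\lambda\log\frac{\pi^*(a_t\mid s_t)}{\pi_{\mathrm{pre}}(a_t\mid s_t)}+\lambda\,\mathbb{D}_{\mathrm{KL}}\bigl(\pi_{\mathrm{pre}}(\cdot\mid s_t)\,\|\,\pi^*(\cdot\mid s_t)\bigr).
\]
The correction therefore enters the preference logit as $+\lambda\sum_t[\mathbb{D}_{\mathrm{KL}}(w)-\mathbb{D}_{\mathrm{KL}}(l)]$, with direction $\pi_{\mathrm{pre}}\|\pi_\theta$. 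Your displayed logit has a minus sign and the reverse direction. That is what you would get by assuming $A_{\pi_{\mathrm{pre}}}$ has zero mean under $\pi^*$, but it does not: $\mathbb{E}_{\pi^*}[A_{\pi_{\mathrm{pre}}}(s_t,\cdot)]=\lambda\bigl[\mathbb{D}_{\mathrm{KL}}(\pi^*\|\pi_{\mathrm{pre}})+\mathbb{D}_{\mathrm{KL}}(\pi_{\mathrm{pre}}\|\pi^*)\bigr]$, which is strictly positive unless $\pi^*=\pi_{\mathrm{pre}}$. Your sentence asserting $\log Z=\mathbb{D}_{\mathrm{KL}}(\pi^*\|\pi_{\mathrm{pre}})\ge 0$ also contradicts the minus sign you then display.

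The direction slip is harmless once you specialize to equal-variance Gaussians, since the KL is then symmetric. The sign is not harmless. Pushing your logit through Step~3 gives $\log\sigma\bigl(-\lambda T\omega(\lambda_t)(\Delta_\theta^w-\Delta_\theta^l+\Delta_{\mathrm{diff}})\bigr)$, with the opposite sign on $\Delta_{\mathrm{diff}}$ from the statement. That objective would reward staying close to $\epsilon_{\mathrm{pre}}$ on winning samples, contrary to the Remark. So ``collecting terms yields the stated objective'' fails as written.

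With the corrected identity the computation goes through. After Jensen, $\mathbb{E}_{q(\mathbf{x}_{t-1}\mid\mathbf{x}_t,\mathbf{x}_0)}\bigl[\log\frac{p_\theta}{p_{\mathrm{pre}}}\bigr]=\mathbb{D}_{\mathrm{KL}}(q\|p_{\mathrm{pre}})-\mathbb{D}_{\mathrm{KL}}(q\|p_\theta)$ holds exactly; it is not a further bound, as your wording suggests. The weight check you worried about also works. Both $\tilde\mu-\mu_*$ and $\mu_\theta-\mu_{\mathrm{pre}}$ carry the factor $\beta_t/\sqrt{\alpha_t(1-\bar\alpha_t)}$, and both KLs are scaled by the same $1/(2\sigma_t^2)$. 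Hence $\mathbb{D}_{\mathrm{KL}}(p_{\mathrm{pre}}\|p_\theta)$ gets the same $\omega(\lambda_t)$ as the denoising terms.

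Finally, state explicitly that $V_{\pi_{\mathrm{pre}}}(\mathbf{x}_T^w)-V_{\pi_{\mathrm{pre}}}(\mathbf{x}_T^l)$ is dropped as a constant. The paper does this following Diffusion-DPO.
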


The full proof is shown in Appendix \ref{proof:dpo}.

\begin{remark}
    Unlike the diffusion DPO objective that is derived from a variant of Eq.~\eqref{eq:rlhf1} with a joint KL
regularizer~\cite{wallace2024diffusion}, our step-level DPO loss contains an extra step-wise regularization term $\Delta_{\mathrm{diff}}$ inside the $\log \sigma(\cdot)$. This term explicitly encourages the aligned model to remain closer to the pretrained model on less-preferred trajectories than on preferred ones at each denoising step. Intuitively, when the training signal is positive (preferred trajectory), the model is allowed to move away from the pretrained model, whereas when the signal is negative (less-preferred trajectory), the penalty keeps the aligned model close to the pretrained weights, making it safe to leave those behaviors unchanged.
\end{remark}

\subsection{Multi-objective alignment}
In Section~\ref{sec:single}, we developed a step-level RL fine-tuning framework for a single reward function. We now extend this framework to the multi-objective setting and design a retraining-free method that efficiently aligns diffusion models with diverse human preferences, without introducing any approximation error.

In the multi-objective setting, we assume a collection of reward functions
$\{r_i\}_{i=1}^M$ corresponding to $M$ distinct objectives. For each objective
$i$, we assume access to the policy $\pi_i$ and the corresponding reverse distribution $p_i$ that maximizes the single-objective
step-level RL objective in Eq.~\eqref{eq:rlhf3} with reward $r_i$. We then show
that, for any weighted reward
$r^w = \sum_{i=1}^M w_i r_i$ with preference weights $w \in \Delta^{M-1}$, we
can construct the optimal reverse distribution $p_w$ purely by combining the distributions
$\{p_i\}_{i=1}^M$, without any additional training. The corresponding denoising-time sampling procedure is summarized in
Algorithm~\ref{alg:multi_obj_denoise}. 
\begin{algorithm}[t]
\caption{Multi-objective Step-level Denoising-time Diffusion Alignment (MSDDA)}
\label{alg:multi_obj_denoise}
\begin{algorithmic}[1]
\REQUIRE Aligned models $\{p_i(\mathbf{x}_{t-1}\mid \mathbf{x}_t)\}_{i=1}^M$,
         preference weights $w \in \Delta^{M-1}$,
         inference steps $T$
\STATE Sample $\mathbf{x}_{T} \sim \mathcal{N}(\mathbf{0},\mathbf{I})$ %\COMMENT{initial latent}
\FOR{$k = T, \dots, 1$}
    \STATE $t \gets t_k,\;\; t_{\text{prev}} \gets t_{k-1}$
    \FOR{$i = 1$ \textbf{to} $M$}
       \STATE $(\mu_i, \sigma_i^2) \gets \text{parameters of }
           p_i(\mathbf{x}_{t-1}\mid \mathbf{x}_t)
           = \mathcal N(\mu_i,\sigma_i^2 \mathbf{I})$
        %\COMMENT{$p_{i}(\mathbf{x}_{t-1}\mid \mathbf{x}_{t})$ is $\mathcal N(\mu_i,\sigma_i^2 I)$}
    \ENDFOR
    \STATE $\sigma_{\text{new}}^{-2}
            \gets \displaystyle\sum_{i=1}^M \frac{w_i}{\sigma_i^2}$
    \STATE $\sigma_{\text{new}}^2 \gets 1 / \sigma_{\text{new}}^{-2}$
    \STATE $\mu_{\text{new}}
            \gets \sigma_{\text{new}}^2
                \displaystyle\sum_{i=1}^M \frac{w_i}{\sigma_i^2}\,\mu_i$
    \STATE Sample $z \sim \mathcal{N}(\mathbf{0},\mathbf{I})$
    \STATE $\mathbf{x}_{t_{\text{prev}}} \gets \mu_{\text{new}} + \sigma_{\text{new}}\, z$
\ENDFOR
\STATE\textbf{return} $\mathbf{x}_{0}$
\end{algorithmic}
\end{algorithm}

Before presenting our main result, we state a lemma that characterizes the optimal policy for step-level RL objectives.

\begin{lemma}[Lemma~4.2, \cite{zeng2024token}]\label{lemma1}
For any reward function $r$, the optimal policy for the step-level RL objective
in Eq.~\eqref{eq:rlhf3} has the closed-form expression
\begin{flalign}
\pi_\theta^\star(z \mid s_t)
  = \frac{
      \pi_{\mathrm{pre}}(z \mid s_t)
      \exp\bigl(Q_{\pi_{\mathrm{pre}}}(s_t, z)/\lambda\bigr)
    }{
      Z(s_t)
    },
\end{flalign}
where
$Z(s_t)
  = \int
      \pi_{\mathrm{pre}}(z \mid s_t)
      \exp\bigl(Q_{\pi_{\mathrm{pre}}}(s_t, z)/\lambda\bigr)\nonumber
    \, dz$
is the normalizing constant.
\end{lemma}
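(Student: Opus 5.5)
The plan is to treat Eq.~\eqref{eq:rlhf3} pointwise in the state: since $s_t\sim\mathcal D$ is drawn from a fixed distribution independent of $\pi_\theta$, and the advantage $A_{\pi_{\mathrm{pre}}}$ is computed under the fixed reference policy, the objective decouples across states. It therefore suffices, for each fixed $s_t$, to maximize over distributions $\pi(\cdot\mid s_t)$ the functional
\begin{flalign}
F(\pi)=\int \pi(z\mid s_t)\,A_{\pi_{\mathrm{pre}}}(s_t,z)\,dz-\lambda\,\mathbb D_{\mathrm{KL}}\bigl(\pi(\cdot\mid s_t)\,\|\,\pi_{\mathrm{pre}}(\cdot\mid s_t)\bigr).\nonumber
\end{flalign}
Because $A_{\pi_{\mathrm{pre}}}(s_t,z)=Q_{\pi_{\mathrm{pre}}}(s_t,z)-V_{\pi_{\mathrm{pre}}}(s_t)$ and $V_{\pi_{\mathrm{pre}}}(s_t)$ does not depend on $z$, the term $\int\pi V\,dz=V_{\pi_{\mathrm{pre}}}(s_t)$ is a constant, and I can replace $A$ by $Q$ without changing the maximizer.

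I will then use the standard Gibbs variational identity rather than Lagrange multipliers. Define $\pi^\star(z\mid s_t)=\pi_{\mathrm{pre}}(z\mid s_t)\exp(Q_{\pi_{\mathrm{pre}}}(s_t,z)/\lambda)/Z(s_t)$. Expanding the KL term and substituting $\log\pi_{\mathrm{pre}}=\log\pi^\star-Q/\lambda+\log Z$ gives
\begin{flalign}
\int \pi Q\,dz-\lambda\,\mathbb D_{\mathrm{KL}}(\pi\|\pi_{\mathrm{pre}})=\lambda\log Z(s_t)-\lambda\,\mathbb D_{\mathrm{KL}}(\pi\|\pi^\star).\nonumber
\end{flalign}
By nonnegativity of KL, with equality iff $\pi=\pi^\star$ almost everywhere, $\pi^\star(\cdot\mid s_t)$ is the unique maximizer for each $s_t$. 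Hence it maximizes the expectation over $\mathcal D$ as well; uniqueness holds on the support of $\mathcal D$.

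The only delicate point is well-definedness. I need $Z(s_t)<\infty$, which holds, for instance, when the reward is bounded so that $Q_{\pi_{\mathrm{pre}}}$ is bounded. I also need to restrict attention to $\pi\ll\pi_{\mathrm{pre}}$, since otherwise the KL is infinite and $F=-\infty$. I expect the main obstacle to be justifying the pointwise decoupling. It relies on the fact that the outer state distribution $\mathcal D$ and the advantage are both fixed and do not depend on $\pi_\theta$, which is exactly the TRPO-style surrogate feature of Eq.~\eqref{eq:rlhf3}. I would state this explicitly, since it is what distinguishes this formulation from Eq.~\eqref{eq:rlhf2}.
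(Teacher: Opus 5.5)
Your proposal is correct and takes essentially the same route as the paper: reduce Eq.~\eqref{eq:rlhf3} to a per-state problem, drop the $z$-independent $V_{\pi_{\mathrm{pre}}}(s_t)$, and rewrite the objective as $\lambda\log Z(s_t)-\lambda\,\mathbb D_{\mathrm{KL}}(\pi\,\|\,\pi^\star)$ (up to that constant), so that nonnegativity of KL identifies the Gibbs policy as the maximizer. Your added remarks on the finiteness of $Z(s_t)$, absolute continuity, and uniqueness on the support of $\mathcal D$ are details the paper leaves implicit.
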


This lemma is from Lemma~4.2 of \cite{zeng2024token}. For completeness, we include the proof in Appendix \ref{proof:lemma}.

A key advantage of our step-level RL formulation is that, once we have obtained the aligned models for each individual objective, we can derive closed-form solutions for the optimal policy corresponding to any weighted reward
$r^w = \sum_{i=1}^M w_i r_i$.
\begin{theorem}\label{theorem1}
Let $\pi_i$ be the optimal policy of Eq.~\eqref{eq:rlhf3} with reward $r_i$, and $p_i$ is the corresponding reverse distribution.
At each step $t$, let $(\mu_i, \sigma_i^2)$ denote the mean and variance of
$p_{i}(\mathbf{x}_{t-1}\mid \mathbf{x}_{t})$ so that
\[
p_{i}(\mathbf{x}_{t-1}\mid \mathbf{x}_{t})
  = \mathcal N(\mu_i,\sigma_i^2 \mathbf{I}).
\]
Then, for any $w\in \Delta^{M-1}$ and
$r^w=\sum_{i=1}^M w_i r_i$, the step-level optimal posterior is
\begin{flalign}
{p}_w(\mathbf{x}_{t-1}\mid \mathbf{x}_{t})
  =
  \frac{
    \prod_{i=1}^M {p}_i^{w_i}(\mathbf{x}_{t-1}\mid \mathbf{x}_{t})
  }{
    \displaystyle
    \int
    \prod_{i=1}^M {p}_i^{w_i}(\mathbf{x}'_{t-1}\mid \mathbf{x}_{t}) \, d\mathbf{x}'_{t-1}
  },
\label{eq:theorem1_post}
\end{flalign}
which is Gaussian with closed-form parameters
\begin{flalign}\label{eq:t1}
    \sigma_w^2 &= \left(\sum_{i=1}^M \frac{w_i}{\sigma_i^2}\right)^{-1}, \nonumber\\
    \mu_{w} &= \sigma_w^2\sum_{i=1}^M \frac{w_i}{\sigma_i^2}\,\mu_i.
\end{flalign}
\end{theorem}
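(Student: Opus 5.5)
The proof goes in two stages: first an algebraic identity at the level of densities, using the closed form of Lemma~\ref{lemma1}, and then a Gaussian computation.

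\textbf{Stage 1: express the weighted Q-function through the $\pi_i$.} For each base reward, Lemma~\ref{lemma1} gives
\[
\pi_i(z\mid s_t)=\pi_{\mathrm{pre}}(z\mid s_t)\exp\bigl(Q^{i}_{\pi_{\mathrm{pre}}}(s_t,z)/\lambda\bigr)/Z_i(s_t),
\]
where $Q^i_{\pi_{\mathrm{pre}}}$ is the Q-function of $\pi_{\mathrm{pre}}$ under reward $r_i$. The key feature of the step-level objective \eqref{eq:rlhf3} is that these Q-functions are computed under the \emph{fixed} reference policy $\pi_{\mathrm{pre}}$, not under the optimized policy. Since $Q_{\pi}(s_t,a_t)=\mathbb E_\pi[r(s_T)\mid s_t,a_t]$ is linear in the reward for a fixed $\pi$, we get
\[
Q^{w}_{\pi_{\mathrm{pre}}}=\sum_i w_i Q^i_{\pi_{\mathrm{pre}}}.
\]
Solving each base identity for the exponential gives $\exp(Q^i_{\pi_{\mathrm{pre}}}/\lambda)=Z_i\,\pi_i/\pi_{\mathrm{pre}}$. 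Substituting into Lemma~\ref{lemma1} for $r^w$ yields
\[
\pi_w^\star(z\mid s_t)\propto \pi_{\mathrm{pre}}(z\mid s_t)\prod_i\Bigl(\tfrac{Z_i(s_t)\,\pi_i(z\mid s_t)}{\pi_{\mathrm{pre}}(z\mid s_t)}\Bigr)^{w_i}.
\]
Because $\sum_i w_i=1$, the factors $\pi_{\mathrm{pre}}$ cancel exactly. The constants $Z_i(s_t)^{w_i}$ do not depend on $z$, so they are absorbed into the normalization. Translating back via $\pi(a_t\mid s_t)=p(\mathbf{x}_{T-t-1}\mid\mathbf{x}_{T-t})$ gives \eqref{eq:theorem1_post}.

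\textbf{Stage 2: identify the Gaussian parameters.} Take logs of $\prod_i \mathcal N(\mu_i,\sigma_i^2\mathbf I)^{w_i}$ to get
\[
-\tfrac12\sum_i \frac{w_i}{\sigma_i^2}\|\mathbf{x}_{t-1}-\mu_i\|^2+\text{const}.
\]
This is a quadratic in $\mathbf{x}_{t-1}$ with precision $\sum_i w_i/\sigma_i^2$ times $\mathbf I$. Completing the square gives the precision-weighted mean $\mu_w$ and variance $\sigma_w^2$ of \eqref{eq:t1}. The normalizing integral is finite because the precision is positive: every $w_i\ge 0$, at least one is positive, and every $\sigma_i^2>0$.

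\textbf{Expected obstacle.} The only delicate point is justifying the linearity step. It requires that the advantage and Q-function in \eqref{eq:rlhf3} are those of $\pi_{\mathrm{pre}}$ for every reward, including $r^w$. This is precisely where the step-level formulation avoids the circular dependence of Eq.~\eqref{eq:rlhf2}: there, the Q-function would be taken under the reward-dependent optimal policy, so it would not be linear in $r$. I would also note two smaller points. First, the replacement of $A$ by $Q$ in Lemma~\ref{lemma1} only shifts the exponent by $V_{\pi_{\mathrm{pre}}}(s_t)$, which is independent of $z$. Second, the theorem takes each $p_i$ to be exactly Gaussian, as the statement assumes, rather than only approximately so.
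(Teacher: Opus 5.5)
Your proposal is correct and follows the paper's proof essentially step for step: you apply Lemma~\ref{lemma1} for each $r_i$ and for $r^w$, use linearity $Q^w_{\pi_{\mathrm{pre}}}=\sum_i w_i Q^i_{\pi_{\mathrm{pre}}}$ under the fixed reference policy, cancel $\pi_{\mathrm{pre}}$ via $\sum_i w_i=1$ while absorbing the $Z_i^{w_i}$ into the normalizer, and then complete the square in $-\sum_i \frac{w_i}{2\sigma_i^2}\|\mathbf{x}_{t-1}-\mu_i\|^2$. Your side remarks are correct and make explicit what the paper leaves implicit: the normalizer is finite because the precision is positive, passing from $A$ to $Q$ only shifts the exponent by $V_{\pi_{\mathrm{pre}}}(s_t)$, and Gaussianity of each $p_i$ is an assumption of the theorem rather than a consequence.
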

We provide a proof sketch below; the full proof is deferred to Appendix \ref{proof:the}.
\begin{proof}[Proof sketch]
By Lemma~\ref{lemma1}, the optimal policy $\pi_i$ for reward $r_i$ satisfies
\begin{flalign}\label{eq:p1}
\pi_i \propto \exp\bigl(Q^i_{\pi_{\mathrm{pre}}}(s_t, z)/\lambda\bigr),
\end{flalign}
where $Q^i_{\pi_{\mathrm{pre}}}$ is the state--action value function under the
reference policy and reward $r_i$. Since the combined reward is
$r^w = \sum_{i=1}^M w_i r_i$, we have
\begin{flalign}\label{eq:p2}
Q^w_{\pi_{\mathrm{pre}}}(s_t, z)
  = \sum_{i=1}^M w_i Q^i_{\pi_{\mathrm{pre}}}(s_t, z).
\end{flalign}
Applying Lemma~\ref{lemma1} with reward $r^w$ and using
Eqs.~\eqref{eq:p1}, ~\eqref{eq:p2} and the definition that $\pi_i(a_t|s_t)=p_i(\mathbf{x}_{T-t-1}|\mathbf{x}_{T-t})$ for $s_t=\mathbf{x}_{T-t}$ and $a_t=\mathbf{x}_{T-t-1}$, we obtain that
\begin{flalign}
p_w
&\propto \exp\bigl(Q^w_{\pi_{\mathrm{pre}}}(s_t, z)/\lambda\bigr) \nonumber\\
&\propto \prod_{i=1}^M
   \exp\bigl(Q^i_{\pi_{\mathrm{pre}}}(s_t, z)/\lambda\bigr)^{w_i} \nonumber\\
&\propto \prod_{i=1}^M p_i^{w_i}.
\end{flalign}
Since each $p_i$ is Gaussian, their weighted product is also Gaussian, and we can calculate the mean and variance, which are in Eq.~\eqref{eq:t1}.
\end{proof}

Theorem~\ref{theorem1} provides closed-form expressions for the mean and variance
of the optimal policy for every preference vector $w \in \Delta^{M-1}$. Thus,
our denoising-time alignment procedure requires no retraining: we only need to
fuse the single-objective posteriors $\{p_i\}_{i=1}^M$.

\begin{figure*}[!htb]
  \centering
  \caption{Quantitative comparison of our proposed MSDDA method and baseline methods. The results for CoDe and RGG are obtained from \cite{cheng2025diffusion}.}
  \label{tab:db-mpa-comparison}
  \vspace{1mm}
  \small
  % added an extra 'l' column after the row-label so we have a dedicated "w=" column
  \begin{tabular}{@{} l l  % <--- first = label, second = w= column
                   cc    % SD: r1 r2 (cols 3-4)
                   cc    % DB-MPA: r1 r2 (cols 5-6)
                   cc    % RS: r1 r2
                   cc    % CoDe: r1 r2
                   cc    % RGG: r1 r2
                   @{} }
    \toprule
    % top header: leave first two heading cells empty (label + w= column)
    & & \multicolumn{2}{c}{SD} &
          \multicolumn{2}{c}{MSDDA} &
          \multicolumn{2}{c}{RS} &
          \multicolumn{2}{c}{CoDe} &
          \multicolumn{2}{c}{RGG} \\
    \cmidrule(lr){3-4}\cmidrule(lr){5-6}\cmidrule(lr){7-8}\cmidrule(lr){9-10}\cmidrule(lr){11-12}
    % second header row: label the two small leading columns, then r1 r2 repeated
    & & $r_1$ & $r_2$  & $r_1$ & $r_2$ & $r_1$ & $r_2$ & $r_1$ & $r_2$ & $r_1$ & $r_2$ \\
    \midrule
    % SD row (no per-w info) -> we fill the w= column with empty cell
    %SD & & 0.22 & -0.15 & \multicolumn{2}{c}{} & \multicolumn{2}{c}{} & \multicolumn{2}{c}{} & \multicolumn{2}{c}{} \\
    \addlinespace[2pt]
    % Reward rows: first column is the multirow label, second column holds the w= value
    \multirow{3}{*}{Reward (↑)}
      & $w=0.2$ & 0.22 & -0.15 & \textbf{0.40} & \textbf{0.52} & 0.24 & 0.37 & \textbf{0.40} & 0.05 & 0.21 & 0.42 \\
      & $w=0.5$  & 0.22 & -0.15 & \textbf{0.61} & \textbf{0.26} & 0.38 & 0.04 & 0.60 & 0.01 & 0.23 & 0.23 \\
      & $w=0.8$  & 0.22 & -0.15 & 0.65 & \textbf{0.02} & 0.65 & -0.12 & \textbf{0.66} & -0.07 & 0.27 & {0.00} \\
    \midrule
    Inference Time (↓ sec/img)
      & & \textbf{5.08}  & & \underline{10.14} & & \textbf{5.08} & & 185.26 & & 121.58 & \\
    \bottomrule
  \end{tabular}
\end{figure*}

Finally, we highlight the novelty of our method compared with existing denoising-time alignment
methods such as DB-MPA~\cite{cheng2025diffusion} and DERADIFF~\cite{anonymous2025deradiff}. These methods are 
derived from Eq.~\eqref{eq:rlhf1}, whose optimal policy is hard to track due to the circular
dependence between the policy and advantage estimates. To obtain a
denoising-time objective, \cite{cheng2025diffusion,anonymous2025deradiff}
discard several terms in the diffusion dynamics, resulting in approximation
errors that are hard to quantify. In contrast, our step-level formulation resolves this intractability problem,
which allows us to derive an exact closed-form denoising objective without any
approximations. In Appendix \ref{approxi}, we provide detailed discussions about how the approximated errors are introduced in  DB-MPA and DERADIFF. 
From an algorithmic perspective, our sampler coincides with the
procedure used in DERADIFF~\cite{anonymous2025deradiff}, but our analysis provides explicit theoretical
guarantees. Moreover, DB-MPA~\cite{cheng2025diffusion} can be viewed as a
special case of our method that \textbf{requires all objectives to share the same variance
$\sigma_i^2$}. Because of this assumption, DB-MPA is not applicable when objectives have different variances. By allowing arbitrary $\sigma_i^2$, our framework strictly generalizes DB-MPA and remains valid in the heterogeneous-variance setting.

\begin{figure}[!tb]
    \centering
    \includegraphics[width=0.9\linewidth]{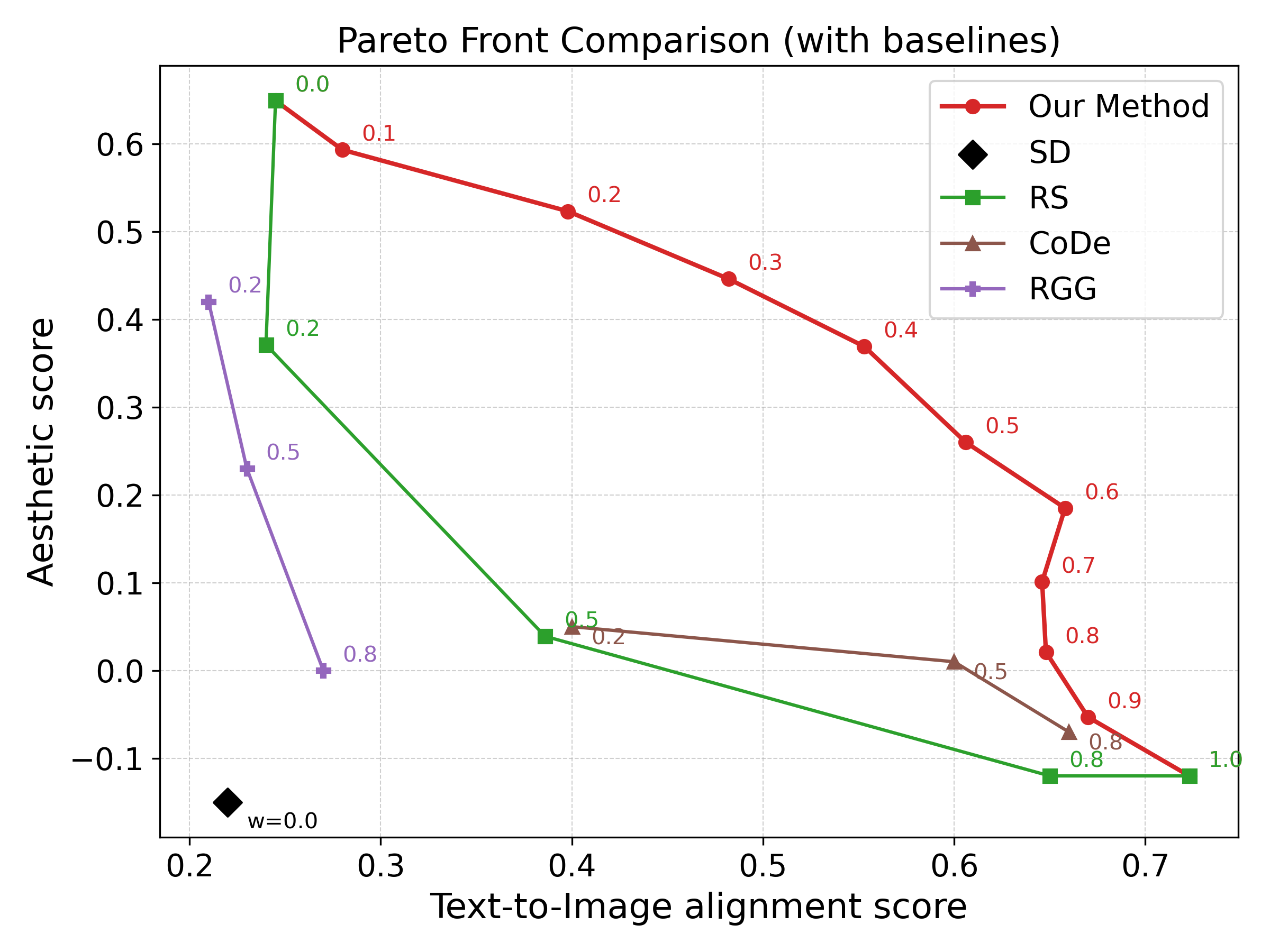}
    \caption{Pareto front of our proposed method and the baselines with different preference weight $w$. The results for CoDe and RGG are obtained from \cite{cheng2025diffusion}.}
    \label{fig:placeholder2}
\end{figure}

\section{Experiments}
We then present numerical results that validate the effectiveness of the proposed MSDDA algorithm, where $\lambda=0.1$ and other experiment details can be found in Appendix \ref{sec:detail}.

\textbf{Prompt dataset:} We adopt the same prompt dataset as in \cite{cheng2025diffusion}, which is a subset of DrawBench \cite{saharia2022photorealistic} restricted to the ``color'' category. To obtain test prompts that do not appear in the training data, GPT-4 is used to synthesize novel color–object and object–object combinations derived from the training set. 

\textbf{Diffusion and reward models:} Our experiments use Stable Diffusion v1.5 \cite{rombach2022high} as the pre-trained generative model. Generated images are evaluated using two reward models: (1) ImageReward \cite{xu2023imagereward} for measuring text–image alignment (higher is better), and (2) VILA \cite{ke2023vila} for assessing aesthetic quality. Following \cite{cheng2025diffusion}, we rescale VILA scores with the linear mapping $r \mapsto 4r-2$ so they share a similar range as ImageReward. For each test prompt, we sample $32$ images using seeds $0-31$, compute each reward for every sample, and report the average score.

\textbf{Baselines:} We compare MSDDA against several baselines: (1) Stable Diffusion (SD) v1.5 \cite{rombach2022high}, which serves as a pretrained baseline; (2) Reward Gradient-based Guidance (RGG) \cite{kim2025test}, which leverages reward values to guide denoising; (3) CoDe \cite{singh2025code}, a method that uses the gradient of the reward to steer sampling; and (4) Reward Soup (RS) \cite{rame2023rewarded}. In our reward-free setup, we have two aligned Stable Diffusion models, each optimized with a different reward and with distinct denoising variance schedules; because RS combines model parameters linearly and the two models share the same architecture, it can be applied in this setting. Another reward-free method, DB-MPA \cite{cheng2025diffusion}, requires identical denoising time variances across models and, therefore, is incompatible with our experimental configuration.

\begin{figure*}[!t]
  \centering
\includegraphics[width=0.7\textwidth]{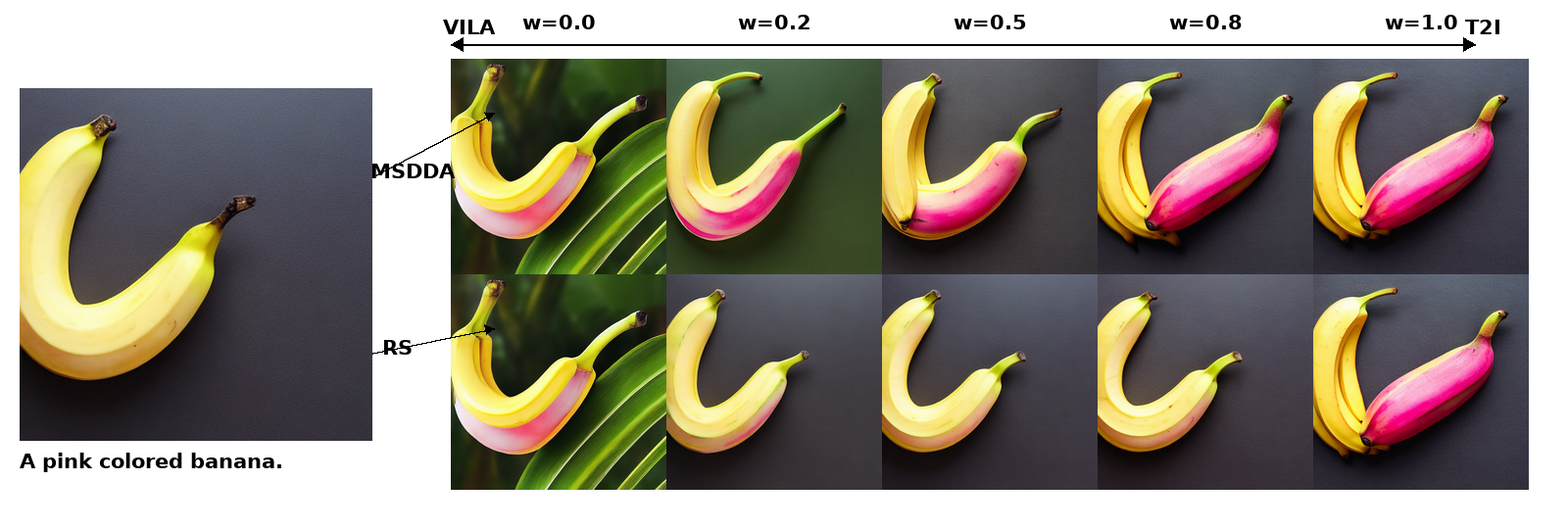}
\includegraphics[width=0.7\textwidth]{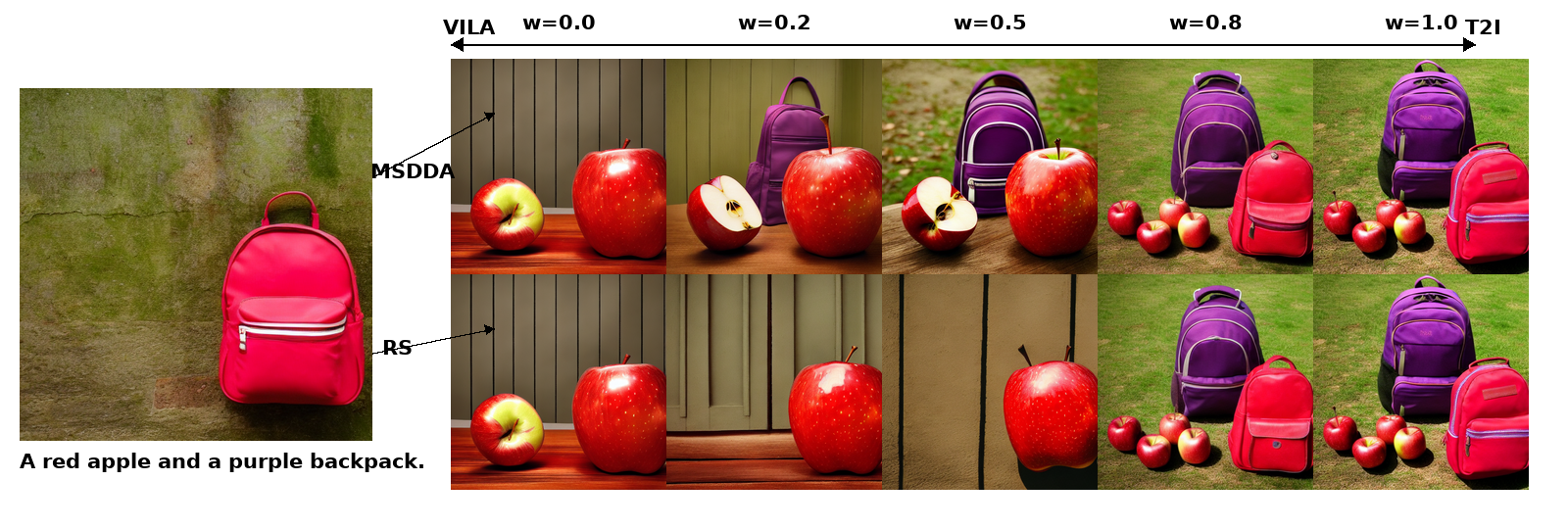}
\includegraphics[width=0.7\textwidth]{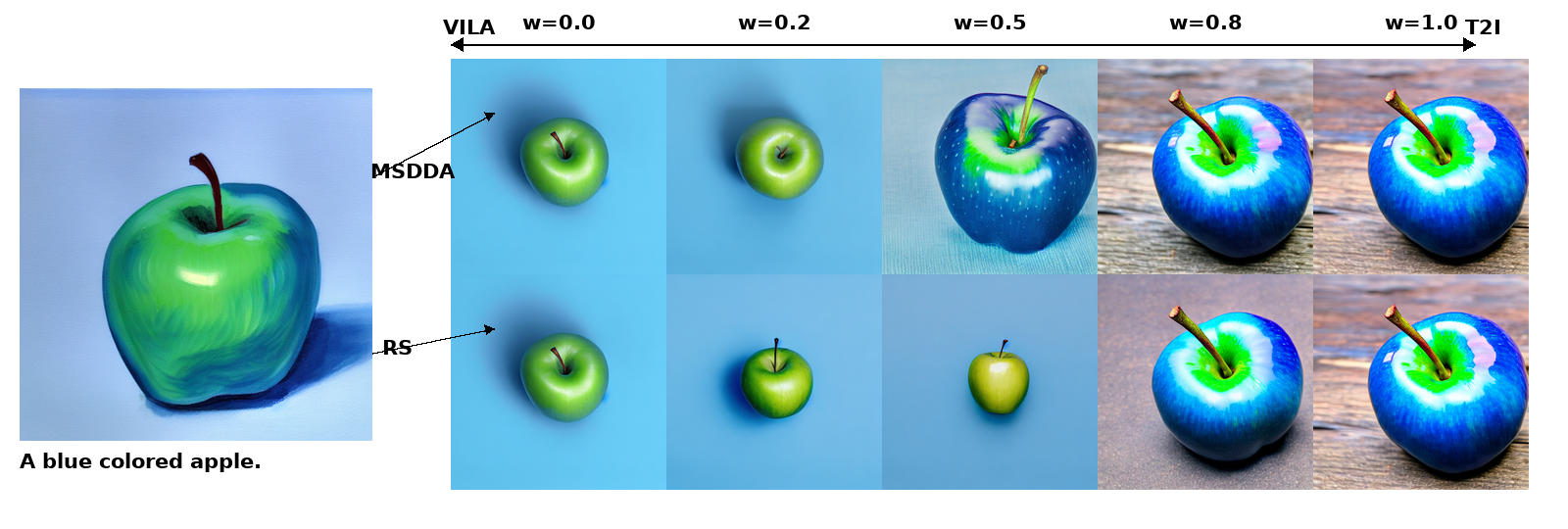}
  \caption{Image samples generated by our proposed method and RS under different preference weights $w$
The left image is generated by Stable Diffusion using the prompt shown in the lower-left corner. The top row shows images generated by our proposed MSDDA, while the bottom row shows images generated by RS.}\label{fig:ex}
\end{figure*}

In Fig. \ref{fig:placeholder2} we plot the Pareto front for our method and the baselines across different preference weights $w$. The horizontal axis shows the ImageReward score, and the vertical axis shows the VILA score. For a given weight $w$, the scalarized target reward is $r^w=wr_1+(1-w)r_2$, where $r_1$ denotes ImageReward and $r_2$ denotes VILA. Table \ref{tab:db-mpa-comparison} lists the numeric reward values for each $w$ and also reports the generation time required per image. From the numerical results, we can observe that
\begin{itemize}
    \item Performance. MSDDA consistently outperforms the baselines. For most preference weights $w$, our method achieves the highest scores on both reward models. The pretrained SD model struggles on several test prompts (e.g., “A purple colored dog” or “A blue colored apple”), which are uncommon in real-world data; all baselines improve over the SD baseline, demonstrating their ability to enhance denoising and alignment.
    \item Latency. SD and RS are the fastest methods because they invoke only a single diffusion model. Our method has the second-shortest runtime, roughly twice the denoising time of SD/RS—consistent with MSDDA’s requirement to run two diffusion models. RGG and Code incur larger runtimes because they evaluate reward functions during denoising.
\end{itemize}

In Fig. \ref{fig:ex}, we show representative images produced by our MSDDA and by RS under different preference weights $w$ to directly show the model performance. The base SD model often fails to follow the prompt (for example, producing a pink banana, a purple backpack with a red apple, or a blue apple). Both MSDDA and RS use the same aligned diffusion backbones, but MSDDA achieves noticeably better alignment with the prompts. For instance, at
$w\in [0.2, 0.5, 0.8]$, MSDDA reliably generates the pink banana and the purple backpack with a red apple, while RS often does not. Additional examples appear in Appendix \ref{sec:detail}.

\section{Conclusion}
This paper addresses the multi-objective denoising-time alignment problem. To solve the intractability of the optimal policy in existing RL fine-tuning formulations, we model the task as a step-level RL problem and derive a corresponding DPO objective that encourages the aligned model to remain closer to the pretrained model on less-preferred trajectories than on preferred ones. 
Building on this formulation, we propose MSDDA, a retraining-free fusion method that leverages closed-form step-level solutions to combine existing aligned models and provably maximizes the multi-objective reward without approximation error or access to individual reward functions. Extensive numerical results are conducted to validate the effectiveness of our proposed MSDDA.

% different assumptions. 
% Lastly, the proposed algorithms are expected to be applicable to various applications, including reinforcement learning and autonomous driving.

\newpage
\bibliography{arxiv2}

@InProceedings{pmlr-v37-sohl-dickstein15,
  title = 	 {Deep Unsupervised Learning using Nonequilibrium Thermodynamics},
  author = 	 {Sohl-Dickstein, Jascha and Weiss, Eric and Maheswaranathan, Niru and Ganguli, Surya},
  booktitle = 	 {Proceedings of the 32nd International Conference on Machine Learning},
  pages = 	 {2256--2265},
  year = 	 {2015},
  editor = 	 {Bach, Francis and Blei, David},
  volume = 	 {37},
  series = 	 {Proceedings of Machine Learning Research},
  address = 	 {Lille, France},
  month = 	 {07--09 Jul},
  publisher =    {PMLR},
  url = 	 {https://proceedings.mlr.press/v37/sohl-dickstein15.html}
}

@article{ho2020denoising,
  title={Denoising diffusion probabilistic models},
  author={Ho, Jonathan and Jain, Ajay and Abbeel, Pieter},
  journal={Advances in neural information processing systems},
  volume={33},
  pages={6840--6851},
  year={2020}
}

@article{fan2023dpok,
  title={Dpok: Reinforcement learning for fine-tuning text-to-image diffusion models},
  author={Fan, Ying and Watkins, Olivia and Du, Yuqing and Liu, Hao and Ryu, Moonkyung and Boutilier, Craig and Abbeel, Pieter and Ghavamzadeh, Mohammad and Lee, Kangwook and Lee, Kimin},
  journal={Advances in Neural Information Processing Systems},
  volume={36},
  pages={79858--79885},
  year={2023}
}

@article{ouyang2022training,
  title={Training language models to follow instructions with human feedback},
  author={Ouyang, Long and Wu, Jeffrey and Jiang, Xu and Almeida, Diogo and Wainwright, Carroll and Mishkin, Pamela and Zhang, Chong and Agarwal, Sandhini and Slama, Katarina and Ray, Alex and others},
  journal={Advances in neural information processing systems},
  volume={35},
  pages={27730--27744},
  year={2022}
}

@article{uehara2024understanding,
  title={Understanding reinforcement learning-based fine-tuning of diffusion models: A tutorial and review},
  author={Uehara, Masatoshi and Zhao, Yulai and Biancalani, Tommaso and Levine, Sergey},
  journal={arXiv preprint arXiv:2407.13734},
  year={2024}
}

@article{yang2019generalized,
  title={A generalized algorithm for multi-objective reinforcement learning and policy adaptation},
  author={Yang, Runzhe and Sun, Xingyuan and Narasimhan, Karthik},
  journal={Advances in neural information processing systems},
  volume={32},
  year={2019}
}

@article{zhou2022anchor,
  title={Anchor-changing regularized natural policy gradient for multi-objective reinforcement learning},
  author={Zhou, Ruida and Liu, Tao and Kalathil, Dileep and Kumar, PR and Tian, Chao},
  journal={Advances in neural information processing systems},
  volume={35},
  pages={13584--13596},
  year={2022}
}

@article{shi2024decoding,
  title={Decoding-time language model alignment with multiple objectives},
  author={Shi, Ruizhe and Chen, Yifang and Hu, Yushi and Liu, Alisa and Hajishirzi, Hanna and Smith, Noah A and Du, Simon S},
  journal={Advances in Neural Information Processing Systems},
  volume={37},
  pages={48875--48920},
  year={2024}
}

@article{rafailov2023direct,
  title={Direct preference optimization: Your language model is secretly a reward model},
  author={Rafailov, Rafael and Sharma, Archit and Mitchell, Eric and Manning, Christopher D and Ermon, Stefano and Finn, Chelsea},
  journal={Advances in neural information processing systems},
  volume={36},
  pages={53728--53741},
  year={2023}
}

@article{liu2024decoding,
  title={Decoding-time realignment of language models},
  author={Liu, Tianlin and Guo, Shangmin and Bianco, Leonardo and Calandriello, Daniele and Berthet, Quentin and Llinares, Felipe and Hoffmann, Jessica and Dixon, Lucas and Valko, Michal and Blondel, Mathieu},
  journal={arXiv preprint arXiv:2402.02992},
  year={2024}
}

@inproceedings{schulman2015trust,
  title={Trust region policy optimization},
  author={Schulman, John and Levine, Sergey and Abbeel, Pieter and Jordan, Michael and Moritz, Philipp},
  booktitle={International conference on machine learning},
  pages={1889--1897},
  year={2015},
  organization={PMLR}
}

@article{zeng2024token,
  title={Token-level direct preference optimization},
  author={Zeng, Yongcheng and Liu, Guoqing and Ma, Weiyu and Yang, Ning and Zhang, Haifeng and Wang, Jun},
  journal={arXiv preprint arXiv:2404.11999},
  year={2024}
}

@article{black2023training,
  title={Training diffusion models with reinforcement learning},
  author={Black, Kevin and Janner, Michael and Du, Yilun and Kostrikov, Ilya and Levine, Sergey},
  journal={arXiv preprint arXiv:2305.13301},
  year={2023}
}

@inproceedings{wallace2024diffusion,
  title={Diffusion model alignment using direct preference optimization},
  author={Wallace, Bram and Dang, Meihua and Rafailov, Rafael and Zhou, Linqi and Lou, Aaron and Purushwalkam, Senthil and Ermon, Stefano and Xiong, Caiming and Joty, Shafiq and Naik, Nikhil},
  booktitle={Proceedings of the IEEE/CVF Conference on Computer Vision and Pattern Recognition},
  pages={8228--8238},
  year={2024}
}

@article{kingma2021variational,
  title={Variational diffusion models},
  author={Kingma, Diederik and Salimans, Tim and Poole, Ben and Ho, Jonathan},
  journal={Advances in neural information processing systems},
  volume={34},
  pages={21696--21707},
  year={2021}
}

@article{cheng2025diffusion,
  title={Diffusion Blend: Inference-Time Multi-Preference Alignment for Diffusion Models},
  author={Cheng, Min and Doudi, Fatemeh and Kalathil, Dileep and Ghavamzadeh, Mohammad and Kumar, Panganamala R},
  journal={arXiv preprint arXiv:2505.18547},
  year={2025}
}

@article{saharia2022photorealistic,
  title={Photorealistic text-to-image diffusion models with deep language understanding},
  author={Saharia, Chitwan and Chan, William and Saxena, Saurabh and Li, Lala and Whang, Jay and Denton, Emily L and Ghasemipour, Kamyar and Gontijo Lopes, Raphael and Karagol Ayan, Burcu and Salimans, Tim and others},
  journal={Advances in neural information processing systems},
  volume={35},
  pages={36479--36494},
  year={2022}
}

@article{ramesh2022hierarchical,
  title={Hierarchical text-conditional image generation with clip latents},
  author={Ramesh, Aditya and Dhariwal, Prafulla and Nichol, Alex and Chu, Casey and Chen, Mark},
  journal={arXiv preprint arXiv:2204.06125},
  volume={1},
  number={2},
  pages={3},
  year={2022}
}

@inproceedings{rombach2022high,
  title={High-resolution image synthesis with latent diffusion models},
  author={Rombach, Robin and Blattmann, Andreas and Lorenz, Dominik and Esser, Patrick and Ommer, Bj{\"o}rn},
  booktitle={Proceedings of the IEEE/CVF conference on computer vision and pattern recognition},
  pages={10684--10695},
  year={2022}
}

@article{lee2023aligning,
  title={Aligning text-to-image models using human feedback},
  author={Lee, Kimin and Liu, Hao and Ryu, Moonkyung and Watkins, Olivia and Du, Yuqing and Boutilier, Craig and Abbeel, Pieter and Ghavamzadeh, Mohammad and Gu, Shixiang Shane},
  journal={arXiv preprint arXiv:2302.12192},
  year={2023}
}

@article{clark2023directly,
  title={Directly fine-tuning diffusion models on differentiable rewards},
  author={Clark, Kevin and Vicol, Paul and Swersky, Kevin and Fleet, David J},
  journal={arXiv preprint arXiv:2309.17400},
  year={2023}
}

@article{wang2025theoretical,
  title={Theoretical study of conflict-avoidant multi-objective reinforcement learning},
  author={Wang, Yudan and Xiao, Peiyao and Ban, Hao and Ji, Kaiyi and Zou, Shaofeng},
  journal={IEEE Transactions on Information Theory},
  year={2025},
  publisher={IEEE}
}

@article{rame2023rewarded,
  title={Rewarded soups: towards pareto-optimal alignment by interpolating weights fine-tuned on diverse rewards},
  author={Rame, Alexandre and Couairon, Guillaume and Dancette, Corentin and Gaya, Jean-Baptiste and Shukor, Mustafa and Soulier, Laure and Cord, Matthieu},
  journal={Advances in Neural Information Processing Systems},
  volume={36},
  pages={71095--71134},
  year={2023}
}

@inproceedings{han2023training,
  title={Training-free multi-objective diffusion model for 3d molecule generation},
  author={Han, Xu and Shan, Caihua and Shen, Yifei and Xu, Can and Yang, Han and Li, Xiang and Li, Dongsheng},
  booktitle={The Twelfth International Conference on Learning Representations},
  year={2023}
}

@article{kim2025test,
  title={Test-time alignment of diffusion models without reward over-optimization},
  author={Kim, Sunwoo and Kim, Minkyu and Park, Dongmin},
  journal={arXiv preprint arXiv:2501.05803},
  year={2025}
}

@article{ye2024tfg,
  title={Tfg: Unified training-free guidance for diffusion models},
  author={Ye, Haotian and Lin, Haowei and Han, Jiaqi and Xu, Minkai and Liu, Sheng and Liang, Yitao and Ma, Jianzhu and Zou, James Y and Ermon, Stefano},
  journal={Advances in Neural Information Processing Systems},
  volume={37},
  pages={22370--22417},
  year={2024}
}

@article{singh2025code,
  title={CoDe: Blockwise Control for Denoising Diffusion Models},
  author={Singh, Anuj and Mukherjee, Sayak and Beirami, Ahmad and Jamali-Rad, Hadi},
  journal={arXiv preprint arXiv:2502.00968},
  year={2025}
}

@inproceedings{wu2023human,
  title={Human preference score: Better aligning text-to-image models with human preference},
  author={Wu, Xiaoshi and Sun, Keqiang and Zhu, Feng and Zhao, Rui and Li, Hongsheng},
  booktitle={Proceedings of the IEEE/CVF International Conference on Computer Vision},
  pages={2096--2105},
  year={2023}
}

@article{desideri2012multiple,
  title={Multiple-gradient descent algorithm (MGDA) for multiobjective optimization},
  author={D{\'e}sid{\'e}ri, Jean-Antoine},
  journal={Comptes Rendus Mathematique},
  volume={350},
  number={5-6},
  pages={313--318},
  year={2012},
  publisher={Elsevier}
}

@article{xiao2023direction,
  title={Direction-oriented multi-objective learning: Simple and provable stochastic algorithms},
  author={Xiao, Peiyao and Ban, Hao and Ji, Kaiyi},
  journal={Advances in Neural Information Processing Systems},
  volume={36},
  pages={4509--4533},
  year={2023}
}

@article{chen2023three,
  title={Three-way trade-off in multi-objective learning: Optimization, generalization and conflict-avoidance},
  author={Chen, Lisha and Fernando, Heshan and Ying, Yiming and Chen, Tianyi},
  journal={Advances in Neural Information Processing Systems},
  volume={36},
  pages={70045--70093},
  year={2023}
}

@inproceedings{
zhang2025mgda,
title={{MGDA} Converges under Generalized Smoothness, Provably},
author={Qi Zhang and Peiyao Xiao and Shaofeng Zou and Kaiyi Ji},
booktitle={The Thirteenth International Conference on Learning Representations},
year={2025},
url={https://openreview.net/forum?id=wgDB1QuxIA}
}

@article{roijers2013survey,
  title={A survey of multi-objective sequential decision-making},
  author={Roijers, Diederik M and Vamplew, Peter and Whiteson, Shimon and Dazeley, Richard},
  journal={Journal of Artificial Intelligence Research},
  volume={48},
  pages={67--113},
  year={2013}
}

@article{hao2023optimizing,
  title={Optimizing prompts for text-to-image generation},
  author={Hao, Yaru and Chi, Zewen and Dong, Li and Wei, Furu},
  journal={Advances in Neural Information Processing Systems},
  volume={36},
  pages={66923--66939},
  year={2023}
}

@InProceedings{Yu_2023_ICCV,
    author    = {Yu, Jiwen and Wang, Yinhuai and Zhao, Chen and Ghanem, Bernard and Zhang, Jian},
    title     = {FreeDoM: Training-Free Energy-Guided Conditional Diffusion Model},
    booktitle = {Proceedings of the IEEE/CVF International Conference on Computer Vision (ICCV)},
    month     = {October},
    year      = {2023},
    pages     = {23174-23184}
}

@InProceedings{pmlr-v202-song23k,
  title = 	 {Loss-Guided Diffusion Models for Plug-and-Play Controllable Generation},
  author =       {Song, Jiaming and Zhang, Qinsheng and Yin, Hongxu and Mardani, Morteza and Liu, Ming-Yu and Kautz, Jan and Chen, Yongxin and Vahdat, Arash},
  booktitle = 	 {Proceedings of the 40th International Conference on Machine Learning},
  pages = 	 {32483--32498},
  year = 	 {2023},
  editor = 	 {Krause, Andreas and Brunskill, Emma and Cho, Kyunghyun and Engelhardt, Barbara and Sabato, Sivan and Scarlett, Jonathan},
  volume = 	 {202},
  series = 	 {Proceedings of Machine Learning Research},
  month = 	 {23--29 Jul},
  publisher =    {PMLR},
  url = 	 {https://proceedings.mlr.press/v202/song23k.html}
}

@inproceedings{bansal2023universal,
  title={Universal guidance for diffusion models},
  author={Bansal, Arpit and Chu, Hong-Min and Schwarzschild, Avi and Sengupta, Soumyadip and Goldblum, Micah and Geiping, Jonas and Goldstein, Tom},
  booktitle={Proceedings of the IEEE/CVF conference on computer vision and pattern recognition},
  pages={843--852},
  year={2023}
}

@article{li2024derivative,
  title={Derivative-free guidance in continuous and discrete diffusion models with soft value-based decoding},
  author={Li, Xiner and Zhao, Yulai and Wang, Chenyu and Scalia, Gabriele and Eraslan, Gokcen and Nair, Surag and Biancalani, Tommaso and Ji, Shuiwang and Regev, Aviv and Levine, Sergey and others},
  journal={arXiv preprint arXiv:2408.08252},
  year={2024}
}

@article{kadkhodaie2021stochastic,
  title={Stochastic solutions for linear inverse problems using the prior implicit in a denoiser},
  author={Kadkhodaie, Zahra and Simoncelli, Eero},
  journal={Advances in Neural Information Processing Systems},
  volume={34},
  pages={13242--13254},
  year={2021}
}

@misc{anonymous2025deradiff,
      title={DeRaDiff: Denoising Time Realignment of Diffusion Models}, 
      author={Ratnavibusena Don Shahain Manujith and Teoh Tze Tzun and Kenji Kawaguchi and Yang Zhang},
      year={2026},
      eprint={2601.20198},
      archivePrefix={arXiv},
      primaryClass={cs.LG},
      url={https://arxiv.org/abs/2601.20198}, 
}

@article{xu2023imagereward,
  title={Imagereward: Learning and evaluating human preferences for text-to-image generation},
  author={Xu, Jiazheng and Liu, Xiao and Wu, Yuchen and Tong, Yuxuan and Li, Qinkai and Ding, Ming and Tang, Jie and Dong, Yuxiao},
  journal={Advances in Neural Information Processing Systems},
  volume={36},
  pages={15903--15935},
  year={2023}
}

@inproceedings{ke2023vila,
  title={Vila: Learning image aesthetics from user comments with vision-language pretraining},
  author={Ke, Junjie and Ye, Keren and Yu, Jiahui and Wu, Yonghui and Milanfar, Peyman and Yang, Feng},
  booktitle={Proceedings of the IEEE/CVF Conference on Computer Vision and Pattern Recognition},
  pages={10041--10051},
  year={2023}
}
\bibliographystyle{plain}

%%%%%%%%%%%%%%%%%%%%%%%%%%%%%%%%%%%%%%%%%%%%%%%%%%%%%%%%%%%%

\newpage
\appendix
\section{Derivation of Lemma \ref{lemma:DPO}}\label{proof:dpo}
In this section, we provide the proof to derive the step-level DPO objective. For each $\mathbf{x}_{0:T}$, we have the following relationships between the reward and value functions:
\begin{flalign}
    &Q_{\pi_{\mathrm{pre}}}(s_t, a_t)=R(s_t, a_t)+V_{\pi_{\mathrm{pre}}}(s_{t+1}),\nonumber\\
     &A_{\pi_{\mathrm{pre}}}(s_t, a_t) = Q_{\pi_{\mathrm{pre}}}(s_t, a_t) - V_{\pi_{\mathrm{pre}}}(s_t).\nonumber
\end{flalign}
It then follows that
\begin{align}\label{eq:reward1}
r(\mathbf{x}_{0})
&= \sum_{t=0}^{T-1} R(s_t,a_t)\nonumber\\
&= \sum_{t=0}^{T-1} Q_{\pi_{\mathrm{pre}}}(s_t, a_t)-V_{\pi_{\mathrm{pre}}}(s_{t+1})\nonumber\\
&= V_{\pi_{\mathrm{pre}}}(s_0)+\sum_{t=0}^{T-1}
   \bigl(
     Q_{\pi_{\mathrm{pre}}}(s_t, a_t) - V_{\pi_{\mathrm{pre}}}(s_t)\bigr)-V_{\pi_{\mathrm{pre}}}(s_{T}
   ) \nonumber\\
&= V_{\pi_{\mathrm{pre}}}(s_0)
   + \sum_{t=0}^{T-1} A_{\pi_{\mathrm{pre}}}(s_t, a_t),
\end{align}
where the first equality is due to the definition of reward function $R$ and the last equality is due to $ V_{\pi_{\mathrm{pre}}}(s_T) = 0$.
According to the BT model, we have that
\begin{flalign}
    p_{BT}(\mathbf{x}_0^w\succ \mathbf{x}_0^l)=\sigma(r(\mathbf{x}_0^w)-r(\mathbf{x}_0^l)).\nonumber
\end{flalign}
Based on the BT model and Eq.~\eqref{eq:reward1}, we can write the loss as
\begin{flalign}
    -&\mathbb E_{(\mathbf{x}_0^w,\mathbf{x}_0^l)\sim \mathcal D}\log\sigma\left(r(\mathbf{x}_0^w)-r(\mathbf{x}_0^l)\right)\nonumber\\
    =&-\mathbb E_{(\mathbf{x}_0^w,\mathbf{x}_0^l)\sim \mathcal D}\log\sigma\left(\mathbb E_{\mathbf{x}_{0:T}^w\sim q(\mathbf{x}_{0:T}^w|\mathbf{x}_{0}^w), \mathbf{x}_{0:T}^l\sim q(\mathbf{x}_{0:T}^l|\mathbf{x}_{0}^l)}[r(\mathbf{x}_0^w)-r(\mathbf{x}_0^l)]\right)\nonumber\\
    =&-\mathbb E_{(\mathbf{x}_0^w,\mathbf{x}_0^l)\sim \mathcal D}\log\sigma\Bigg(\mathbb E_{\mathbf{x}_{0:T}^w\sim q(\mathbf{x}_{0:T}^w|\mathbf{x}_{0}^w), \mathbf{x}_{0:T}^l\sim q(\mathbf{x}_{0:T}^l|\mathbf{x}_{0}^l)}\Bigg[V_{\pi_{\mathrm{pre}}}(s_0^w)
   + \sum_{t=0}^{T-1} A_{\pi_{\mathrm{pre}}}(s_t^w, a_t^w)\nonumber\\
   &-V_{\pi_{\mathrm{pre}}}(s_0^l)
   - \sum_{t=0}^{T-1} A_{\pi_{\mathrm{pre}}}(s_t^l, a_t^l)\Bigg]\Bigg).
\end{flalign}
Similar to \cite{wallace2024diffusion}, we remove the $V_{\pi_{\mathrm{pre}}}(s_0^w), V_{\pi_{\mathrm{pre}}}(s_0^l)$ terms since they are constant and obtain the following objective
\begin{flalign}\label{eq:loss1}
    &-\mathbb E_{(\mathbf{x}_0^w,\mathbf{x}_0^l)\sim \mathcal D}\log\sigma\Bigg(\mathbb E_{\mathbf{x}_{0:T}^w\sim q(\mathbf{x}_{0:T}^w|\mathbf{x}_{0}^w), \mathbf{x}_{0:T}^l\sim q(\mathbf{x}_{0:T}^l|\mathbf{x}_{0}^l)}\Bigg[ \sum_{t=0}^{T-1} A_{\pi_{\mathrm{pre}}}(s_t^w, a_t^w)- \sum_{t=0}^{T-1} A_{\pi_{\mathrm{pre}}}(s_t^l, a_t^l)\Bigg]\Bigg)\nonumber\\
    &=-\mathbb E_{(\mathbf{x}_0^w,\mathbf{x}_0^l)\sim \mathcal D}\log\sigma\Bigg(T\mathbb E_{t\sim \mathcal{U}(0,T-1), \mathbf{x}_{0:T}^w\sim q(\mathbf{x}_{0:T}^w|\mathbf{x}_{0}^w), \mathbf{x}_{0:T}^l\sim q(\mathbf{x}_{0:T}^l|\mathbf{x}_{0}^l)}\Bigg[ A_{\pi_{\mathrm{pre}}}(s_t^w, a_t^w)-  A_{\pi_{\mathrm{pre}}}(s_t^l, a_t^l)\Bigg]\Bigg).
\end{flalign}
Moreover, based on the definition of value functions and Lemma \ref{lemma1}, we can show that 
\begin{flalign}
     A_{\pi_{\mathrm{pre}}}(s_t^w, a_t^w)&=Q_{\pi_{\mathrm{pre}}}(s_t^w, a_t^w)- V_{\pi_{\mathrm{pre}}}(s_t^w)\nonumber\\
     &=Q_{\pi_{\mathrm{pre}}}(s_t^w, a_t^w)- \mathbb E_{a\sim \pi_{\mathrm{pre}}}[Q_{\pi_{\mathrm{pre}}}(s_t^w, a)]\nonumber\\
     &=\lambda \log\left(\frac{p_\theta(\mathbf{x}_{T-t-1}^w|\mathbf{x}_{T-t}^w)}{p_{\text{pre}}(\mathbf{x}_{T-t-1}^w|\mathbf{x}_{T-t}^w)}\right)+\lambda\log(Z(\mathbf{x}_{T-t}^w)) \nonumber\\
     &-\mathbb E_{\mathbf{x}_{T-t-1}^w\sim \pi_{\text{pre}}}\left[\lambda \log\left(\frac{p_\theta(\mathbf{x}_{T-t-1}^w|\mathbf{x}_{T-t}^w)}{p_{\text{pre}}(\mathbf{x}_{T-t-1}^w|\mathbf{x}_{T-t}^w)}\right)\right]-\lambda\log(Z(\mathbf{x}_{T-t}^w)) \nonumber\\
     &=\lambda \log\left(\frac{p_\theta(\mathbf{x}_{T-t-1}^w|\mathbf{x}_{T-t}^w)}{p_{\text{pre}}(\mathbf{x}_{T-t-1}^w|\mathbf{x}_{T-t}^w)}\right)+\lambda\mathbb{D}_{\mathrm{KL}}(p_{\text{pre}}(\cdot|\mathbf{x}_{T-t}^w)||p_{\theta}(\cdot|\mathbf{x}_{T-t}^w)).
\end{flalign}
By Jensen’s inequality, the objective in Eq.~\eqref{eq:loss1} can be upper bounded by 
\begin{flalign}\label{eq:dpo3}
      -&\mathbb E_{(\mathbf{x}_0^w,\mathbf{x}_0^l)\sim \mathcal D}\log\sigma\Bigg(T\mathbb E_{t\sim \mathcal{U}(0,T-1), \mathbf{x}_{0:T}^w\sim q(\mathbf{x}_{0:T}^w|\mathbf{x}_{0}^w), \mathbf{x}_{0:T}^l\sim q(\mathbf{x}_{0:T}^l|\mathbf{x}_{0}^l)}\Bigg[ A_{\pi_{\mathrm{pre}}}(s_t^w, a_t^w)-  A_{\pi_{\mathrm{pre}}}(s_t^l, a_t^l)\Bigg]\Bigg)\nonumber\\
      \le&  -\mathbb E_{t\sim \mathcal{U}(0,T-1), (\mathbf{x}_0^w,\mathbf{x}_0^l)\sim \mathcal D, \mathbf{x}_{T-t}^w\sim q(\mathbf{x}_{T-t}^w|\mathbf{x}_{0}^w), \mathbf{x}_{T-t}^l\sim q(\mathbf{x}_{T-t}^l|\mathbf{x}_{0}^l) }\nonumber\\
      &\log\sigma\Bigg(T\mathbb E_{ \mathbf{x}_{T-t-1}^w\sim q(\mathbf{x}_{T-t-1}^w|\mathbf{x}_{T-t}^w, \mathbf{x}_{0}^w), \mathbf{x}_{T-t-1}^l\sim q(\mathbf{x}_{T-t-1}^l|\mathbf{x}_{T-t}^l, \mathbf{x}_{0}^l)}\Bigg[ A_{\pi_{\mathrm{pre}}}(s_t^w, a_t^w)-  A_{\pi_{\mathrm{pre}}}(s_t^l, a_t^l)\Bigg]\Bigg).
\end{flalign}
It can be further shown that
\begin{flalign}
    \mathbb E&_{\mathbf{x}_{T-t-1}^w\sim q(\mathbf{x}_{T-t-1}^w|\mathbf{x}_{T-t}^w, \mathbf{x}_{0}^w), \mathbf{x}_{T-t-1}^l\sim q(\mathbf{x}_{T-t-1}^l|\mathbf{x}_{T-t}^l, \mathbf{x}_{0}^l)}\Bigg[ A_{\pi_{\mathrm{pre}}}(s_t^w, a_t^w)\Bigg]\nonumber\\
    =&\lambda\mathbb{D}_{\mathrm{KL}}(p_{\text{pre}}(\cdot|\mathbf{x}_{T-t}^w)||p_{\theta}(\cdot|\mathbf{x}_{T-t}^w))\nonumber\\
    &+\lambda \mathbb E_{\mathbf{x}_{T-t-1}^w\sim q(\mathbf{x}_{T-t-1}^w|\mathbf{x}_{T-t}^w, \mathbf{x}_{0}^w), \mathbf{x}_{T-t-1}^l\sim q(\mathbf{x}_{T-t-1}^l|\mathbf{x}_{T-t}^l, \mathbf{x}_{0}^l)}\Bigg[\log\left(\frac{p_\theta(\mathbf{x}_{T-t-1}^w|\mathbf{x}_{T-t}^w)}{p_{\text{pre}}(\mathbf{x}_{T-t-1}^w|\mathbf{x}_{T-t}^w)}\right)\Bigg]\nonumber\\
    =&\lambda\mathbb{D}_{\mathrm{KL}}(p_{\text{pre}}(\cdot|\mathbf{x}_{T-t}^w)||p_{\theta}(\cdot|\mathbf{x}_{T-t}^w))-\lambda\mathbb{D}_{\mathrm{KL}}(q(\cdot|\mathbf{x}_{T-t}^w, \mathbf{x}_{0}^w )||\pi_{\theta}(\cdot|\mathbf{x}_{T-t}^w))+\lambda\mathbb{D}_{\mathrm{KL}}(q(\cdot|\mathbf{x}_{T-t}^w, \mathbf{x}_{0}^w)||p_{\text{pre}}(\cdot|\mathbf{x}_{T-t}^w)).
\end{flalign}
Applying the same analysis to $A_{\pi_{\mathrm{pre}}}(s_t^l, a_t^l)$ and plugging the results to Eq.~\eqref{eq:dpo3}, we then  can get the following objective:
\begin{flalign}
     -&\mathbb E_{t\sim \mathcal{U}(0,T-1), (\mathbf{x}_0^w,\mathbf{x}_0^l)\sim \mathcal D, \mathbf{x}_{T-t}^w\sim q(\mathbf{x}_{T-t}^w|\mathbf{x}_{0}^w), \mathbf{x}_{T-t}^l\sim q(\mathbf{x}_{T-t}^l|\mathbf{x}_{0}^l) }\nonumber\\
      &\log\sigma\Bigg(T\mathbb E_{ \mathbf{x}_{T-t-1}^w\sim q(\mathbf{x}_{T-t-1}^w|\mathbf{x}_{T-t}^w, \mathbf{x}_{0}^w), \mathbf{x}_{T-t-1}^l\sim q(\mathbf{x}_{T-t-1}^l|\mathbf{x}_{T-t}^l, \mathbf{x}_{0}^l)}\Bigg[ A_{\pi_{\mathrm{pre}}}(s_t^w, a_t^w)-  A_{\pi_{\mathrm{pre}}}(s_t^l, a_t^l)\Bigg]\Bigg)\nonumber\\
      =&\mathbb E_{t\sim \mathcal{U}(0,T-1), (\mathbf{x}_0^w,\mathbf{x}_0^l)\sim \mathcal D, \mathbf{x}_{T-t}^w\sim q(\mathbf{x}_{T-t}^w|\mathbf{x}_{0}^w), \mathbf{x}_{T-t}^l\sim q(\mathbf{x}_{T-t}^l|\mathbf{x}_{0}^l) }\nonumber\\
      &\log\sigma\Bigg(T\lambda\Bigg(-\mathbb{D}_{\mathrm{KL}}(q(\cdot|\mathbf{x}_{T-t}^w, \mathbf{x}_{0}^w )||\pi_{\theta}(\cdot|\mathbf{x}_{T-t}^w))+\mathbb{D}_{\mathrm{KL}}(q(\cdot|\mathbf{x}_{T-t}^l, \mathbf{x}_{0}^l )||\pi_{\theta}(\cdot|\mathbf{x}_{T-t}^l))\nonumber\\
      &+\mathbb{D}_{\mathrm{KL}}(q(\cdot|\mathbf{x}_{T-t}^w, \mathbf{x}_{0}^w)||p_{\text{pre}}(\cdot|\mathbf{x}_{T-t}^w))-\mathbb{D}_{\mathrm{KL}}(q(\cdot|\mathbf{x}_{T-t}^l, \mathbf{x}_{0}^l)||p_{\text{pre}}(\cdot|\mathbf{x}_{T-t}^l))\nonumber\\
      &+\mathbb{D}_{\mathrm{KL}}(p_{\text{pre}}(\cdot|\mathbf{x}_{T-t}^w)||p_{\theta}(\cdot|\mathbf{x}_{T-t}^w))-\mathbb{D}_{\mathrm{KL}}(p_{\text{pre}}(\cdot|\mathbf{x}_{T-t}^l)||p_{\theta}(\cdot|\mathbf{x}_{T-t}^l))\Bigg)\Bigg)
\end{flalign}
By the Gaussian parameterization of the reverse process \cite{wallace2024diffusion}, this objective can be simplified to 
\begin{flalign}
L_{SDPO}=&-\mathbb{E}_{(\mathbf{x}_0^w,\mathbf{x}_0^l)\sim \mathcal{D},\, t\sim \mathcal{U}(1,T),\,
      \mathbf{x}_t^w\sim q(\mathbf{x}_t^w \mid \mathbf{x}_0^w),\, \mathbf{x}_t^l\sim q(\mathbf{x}_t^l \mid \mathbf{x}_0^l)}\nonumber\\
     & 
       \Big[
      \log \sigma\Big( -\lambda T \omega (\lambda_t)\Big(
          \|\epsilon^w - \epsilon_\theta(\mathbf{x}_t^w,t)\|_2^2
          - \|\epsilon^w - \epsilon_{\mathrm{pre}}(\mathbf{x}_t^w,t)\|_2^2 \nonumber\\
&
          - \big(
              \|\epsilon^l - \epsilon_\theta(\mathbf{x}_t^l,t)\|_2^2
              - \|\epsilon^l - \epsilon_{\mathrm{pre}}(\mathbf{x}_t^l,t)\|_2^2
            \big)\nonumber\\
 &   - \big(
              \|\epsilon_\theta(\mathbf{x}_t^w,t)-\epsilon_{\mathrm{pre}}(\mathbf{x}_t^w,t)\|_2^2
              - \|\epsilon_{\theta}(\mathbf{x}_t^l,t) - \epsilon_{\mathrm{pre}}(\mathbf{x}_t^l,t)\|_2^2
            \big)
        \Big)
      \Big)
      \Big], 
\end{flalign}
where $\mathbf{x}_{0}^w$ and $\mathbf{x}_{0}^l$ denote the winning and losing samples, respectively and  we have that
\[
\mathbf{x}_t^{*} = \sqrt{\bar{\alpha}_t}\,\mathbf{x}_0^{*}
+ \sqrt{1-\bar{\alpha}_t}\,\epsilon^{*}, 
\qquad
\epsilon^{*} \sim \mathcal{N}(\mathbf{0},\mathbf{I}),
\]
which is drawn from \(q(\mathbf{x}_t^{*} \mid \mathbf{x}_0^{*})\), and the quantity \(\lambda_t = \bar{\alpha}_t / (1-\bar{\alpha}_t)\) is the signal-to-noise ratio. The function $\omega(\lambda_t)$ is a weighting function (often taken to be constant in practice~\cite{ho2020denoising,kingma2021variational}). $\epsilon_*(\mathbf{x}_t,t)$ is a function approximator that predicts $\epsilon$ from $\mathbf{x}_t$ such that 
\begin{flalign}
    \mu_*(\mathbf{x}_t,t)
    = \frac{1}{\sqrt{\alpha_t}}
      \left(
        \mathbf{x}_t
        - \frac{\beta_t}{\sqrt{1-\bar{\alpha}_t}}
          \,\epsilon_*(\mathbf{x}_t,t)
      \right).
\end{flalign}

\section{Proof of Lemma \ref{lemma1}}\label{proof:lemma}
\begin{proof}
We can first define that
\begin{align}
\mathcal{J}(\pi_\theta; s_t)
&= \mathbb{E}_{z \sim \pi_\theta(\cdot \mid s_t)}
   \left[
     Q_{\pi_{\mathrm{pre}}}(s_t, z) - V_{\pi_{\mathrm{pre}}}(s_t)
     - \lambda
       \log
       \frac{
         \pi_\theta(z \mid s_t)
       }{
         \pi_{\mathrm{pre}}(z \mid s_t)
       }
   \right] \nonumber \\
&= \lambda
   \mathbb{E}_{z \sim \pi_\theta(\cdot \mid s_t)}
   \left[
     \log
     \frac{
       \pi_{\mathrm{pre}}(z \mid s_t)
       \exp\bigl(Q_{\pi_{\mathrm{pre}}}(s_t, z)/\lambda\bigr)
     }{
       \pi_\theta(z \mid s_t)
     }
   \right]
   - V_{\pi_{\mathrm{pre}}}(s_t),\nonumber
\end{align}
where we have that $
    Z(s_t)
  = \int
      \pi_{\mathrm{ref}}(z \mid s_t)
      \exp\bigl(Q_{\pi_{\mathrm{ref}}}(s_t, z)/\lambda\bigr)
    \, dz.
$
The optimal solution of Eq.~\eqref{eq:rlhf3} can be obtained by optimizing $\mathcal{J}(\pi_\theta; s_t)$ for each $s_t$. We can further rewrite $\mathcal{J}(\pi_\theta; s_t)$ as:
\begin{align}
\mathcal{J}(\pi_\theta; s_t)
&= - \lambda
   \mathbb D_{\mathrm{KL}}\left(
     \pi_\theta(\cdot \mid s_t)
     \,\middle\|\,
     \frac{
       \pi_{\mathrm{pre}}(\cdot \mid s_t)
       \exp\bigl(Q_{\pi_{\mathrm{pre}}}(s_t, \cdot)/\lambda\bigr)
     }{
       Z(s_t)
     }
   \right)
   - V_{\pi_{\mathrm{pre}}}(s_t)
   + \lambda \log Z(s_t).
\end{align}
Thus, we can find the optimal solution:
\begin{flalign}
\pi_\theta^\star(z \mid s_t)
  = \frac{
      \pi_{\mathrm{ref}}(z \mid s_t)
      \exp\bigl(Q_{\pi_{\mathrm{pre}}}(s_t, z)/\lambda\bigr)
    }{
      Z(s_t)
    }.
\end{flalign}
This completes the proof.
\end{proof}

\section{Proof of Theorem \ref{theorem1}}\label{proof:the}
%\paragraph{Gaussian case and closed-form parameters.}
\begin{proof}
Based on Lemma \ref{lemma1}, we can show that for each $1\le i\le M$, we have that
\begin{flalign}
\pi_i(z \mid s_t)
  = \frac{
      \pi_{\mathrm{pre}}(z \mid s_t)
      \exp\bigl(Q_{\pi_{\mathrm{pre}}}^i(s_t, z)/\lambda\bigr)
    }{
      Z_i(s_t)
    },\nonumber
\end{flalign}
where $Q_{\pi_{\mathrm{pre}}}^i$ is the state-action value function with reward $r_i$ and $
    Z_i(s_t)
  = \int
      \pi_{\mathrm{ref}}(z \mid s_t)
      \exp\bigl(Q^i_{\pi_{\mathrm{pre}}}(s_t, z)/\lambda\bigr)
    \, dz$. Moreover, we can further show that
\begin{flalign}\label{eq:piw}
\pi_w(z \mid s_t)
  = \frac{
      \pi_{\mathrm{pre}}(z \mid s_t)
      \exp\bigl(Q_{\pi_{\mathrm{pre}}}^w(s_t, z)/\lambda\bigr)
    }{
      Z_w(s_t)
    },
\end{flalign}
where $Q_{\pi_{\mathrm{pre}}}^w$ is the state-action value function with reward $r_w$ and $
    Z_w(s_t)
  = \int
      \pi_{\mathrm{ref}}(z \mid s_t)
      \exp\bigl(Q^w_{\pi_{\mathrm{pre}}}(s_t, z)/\lambda\bigr)
    \, dz.$
For the state-action function, we can show that 
\begin{flalign}
    Q^w_{\pi_{\mathrm{pre}}}(s_t, z)&=\mathbb{E}_{\pi_{\mathrm{pre}}}\bigl[ r^w(s_T) \mid s_t, a_t \bigr]\nonumber\\
    &=\mathbb{E}_{\pi_{\mathrm{pre}}}\left[ \sum_{i=1}^Mw_ir_i(s_T) \mid s_t, a_t \right]\nonumber\\
    &=\sum_{i=1}^Mw_i\mathbb{E}_{\pi_{\mathrm{pre}}}\bigl[ r_i(s_T) \mid s_t, a_t \bigr]\nonumber\\
    &=\sum_{i=1}^Mw_iQ^i_{\pi_{\mathrm{pre}}}(s_t, z).
\end{flalign}
As a result, we can show that
\begin{flalign}
    \exp\bigl(Q_{\pi_{\mathrm{pre}}}^w(s_t, z)/\lambda\bigr)&=\exp\left(\sum_{i=1}^Mw_iQ^i_{\pi_{\mathrm{pre}}}(s_t, z)/\lambda\right)\nonumber\\
    &=\prod_{i=1}^M\exp\bigl(w_iQ^i_{\pi_{\mathrm{pre}}}(s_t, z)/\lambda\bigr)\nonumber\\
    &=\prod_{i=1}^M\exp\bigl(Q^i_{\pi_{\mathrm{pre}}}(s_t, z)/\lambda\bigr)^{w_i}\nonumber\\
    &=\frac{\prod_{i=1}^M \pi_i^{w_i}(z \mid s_t) Z_i^{w_i}(s_t)}{{\pi_{\mathrm{pre}}}}.\nonumber
\end{flalign}
Plugging the above equation into Eq.~\eqref{eq:piw}, it can be shown that 
\begin{flalign}
    \pi_w(z \mid s_t)=\frac{ \prod_{i=1}^M \pi_i^{w_i}(z \mid s_t)}{\int \prod_{i=1}^M \pi_i^{w_i}(z' \mid s_t) dz'}.\nonumber
\end{flalign}
Based on the definition that $\pi_i(a_t|s_t)=p_i(\mathbf{x}_{T-t-1}|\mathbf{x}_{T-t})$ for $s_t=\mathbf{x}_{T-t}$ and $a_t=\mathbf{x}_{T-t-1}$, we know that 
\begin{flalign}
{p}_w(\mathbf{x}_{t-1}\mid \mathbf{x}_{t})
  =
  \frac{
    \prod_{i=1}^M {p}_i^{w_i}(\mathbf{x}_{t-1}\mid \mathbf{x}_{t})
  }{
    \displaystyle
    \int
    \prod_{i=1}^M {p}_i^{w_i}(\mathbf{x}'_{t-1}\mid \mathbf{x}_{t}) \, d\mathbf{x}'_{t-1}\nonumber
  },
\end{flalign}
where each ${p}_i(\mathbf{x}_{t-1}\mid \mathbf{x}_{t})  $ follows the Gaussian distribution with mean and variance $(\mu_i, \sigma_i^2) $. Thus, ${p}_w(\mathbf{x}_{t-1}\mid \mathbf{x}_{t})$ is also a Gaussian distribution, and ${p}_w(\mathbf{x}_{t-1}\mid \mathbf{x}_{t}) \propto \exp\left(-\sum_{i=1}^M\frac{w_i}{2\sigma_i^2}\|\mathbf{x}_{t-1}-\mu_i\|^2\right)$. We can then derive its mean and variance as: 
\begin{flalign}
    &\sigma_w^2= \left(\sum_{i=1}^M \frac{w_i}{\sigma_i^2}\right)^{-1},\nonumber\\
    &\mu_{w}=\sigma_w^2\sum_{i=1}^M \frac{w_i}{\sigma_i^2}\,\mu_i.\nonumber
\end{flalign}
This completes the proof.
\end{proof}

\section{Approximated Errors in Existing Denoising-Time Works}\label{approxi}
\textbf{DB-MPA~\cite{cheng2025diffusion}} considers a stochastic differential equation (SDE) formulation:
\[
\mathrm{d}\mathbf{x}_t = f^{\mathrm{pre}}(\mathbf{x}_t,t)\,\mathrm{d}t + \sigma(t)\,\mathrm{d}w_t,\qquad \forall\, t\in[T,0],
\]
where $\sigma(t), \beta(t)$ represents two fixed scalar functions, $w_t$ is a standard Wiener
process,  $p_t$ is a marginal distribution and  $f^{\mathrm{pre}}(\mathbf{x}_t,t)$ denotes the term $$ f^{\mathrm{pre}}(\mathbf{x}_t,t)=
-\tfrac{1}{2}\beta(t)\,\mathbf{x}_t \;-\; \beta(t)\,\nabla_{\mathbf{x}_t}\log p_t(\mathbf{x}_t).
$$
To maximize Eq.~\eqref{eq:rlhf1} with reward $r$, the SDE becomes 
\[
\mathrm{d}\mathbf{x}_t = f^{(r,\lambda)}(\mathbf{x}_t,t)\,\mathrm{d}t + \sigma(t)\,\mathrm{d}w_t,\qquad \forall\, t\in[T,0],
\]
where 
\[
f^{(r,\lambda)}(\mathbf{x}_t,t) = f^{\mathrm{pre}}(\mathbf{x}_t,t) + u^{(r,\lambda)}(\mathbf{x}_t,t),
\]
and
\[
u^{(r,\lambda)}(\mathbf{x}_t,t)
= \nabla_{\mathbf{x}_t} \log \mathbb{E}_{\mathbf{x}_0 \sim p^{\mathrm{pre}}_{0|t}(\cdot \mid \mathbf{x}_t)}
\left[ \exp\left( \frac{r(\mathbf{x}_0)}{\lambda} \right) \right].
\]
Let $\Delta^{(r_i,\lambda)}(\mathbf{x},t)=u^{(r,\lambda)}(\mathbf{x}_t,t)- \nabla_{\mathbf{x}_t}  \mathbb{E}_{\mathbf{x}_0 \sim p^{\mathrm{pre}}_{0|t}(\cdot \mid \mathbf{x}_t)}
\left[  \frac{r(\mathbf{x}_0)}{\lambda} \right]$, for each $r^w=\sum_{i=1}^Mw_ir_i$, it can be shown  that with the identical variance, 
\begin{flalign}\label{eq:DB}
    f^{(r^w,\lambda)}(\mathbf{x}_t,t)
= \sum_{i=1}^M w_i\, f^{(r_i,\lambda)}(\mathbf{x}_t,t)
+ \Bigl(\Delta^{(r^w,\lambda)}(\mathbf{x}_t,t) - \sum_{i=1}^M w_i\,\Delta^{(r_i,\lambda)}(\mathbf{x}_t,t)\Bigr).
\end{flalign}
To get the denoising-time solution, DB-MPA ignores the last two terms in Eq.~\eqref{eq:DB} and gets $f^{(r^w,\lambda)}(\mathbf{x}_t,t)\approx  \sum_{i=1}^M w_i\, f^{(r_i,\lambda)}(\mathbf{x}_t,t).$

\textbf{DERADIFF~\cite{anonymous2025deradiff}} studies the RL fine-tuning problem in Eq.~\eqref{eq:rlhf1}. Similar to our Lemma \ref{lemma1} and Theorem \ref{theorem1}, the optimal solution should satisfy that 
\begin{flalign}
    p_w(\mathbf{x}_0)=\frac{ \prod_{i=1}^M p_i^{w_i}(\mathbf{x}_0)}{\int \prod_{i=1}^M p_i^{w_i}(\mathbf{x}_0') d\mathbf{x}_0'}.
\end{flalign}
However, it is hard to track the marginal distribution of $\mathbf{x}_0$, thus DERADIFF conducts a stepwise approximation for any $t$ without any theoretical justification:
\begin{flalign}
    p_w(\mathbf{x}_{t-1}|\mathbf{x}_t)=\frac{ \prod_{i=1}^M p_i^{w_i}(\mathbf{x}_{t-1}|\mathbf{x}_t)}{\int \prod_{i=1}^M p_i^{w_i}(\mathbf{x}_{t-1}'|\mathbf{x}_t) d\mathbf{x}_{t-1}'}.
\end{flalign}

\section{Experiment details}\label{sec:detail}
\subsection{Prompts}
For completeness, we provide the prompt dataset \cite{cheng2025diffusion} in Table \ref{tab:prompts}, which is a subset of DrawBench \cite{saharia2022photorealistic} restricted to the ``color'' category. To obtain test prompts that do not appear in the training data, GPT-4 is used to synthesize novel color–object and object–object combinations derived from the training set. 
\begin{table}[htbp]
\centering
\small
\renewcommand{\arraystretch}{1.05}
\begin{tabular}{|p{0.46\textwidth}|p{0.46\textwidth}|}
\hline
\textbf{Training Prompts} & \textbf{Test Prompts} \\
\hline
\begin{minipage}[t]{\linewidth}\raggedright
A red colored car.\\
A black colored car.\\
A pink colored car.\\
A black colored dog.\\
A red colored dog.\\
A blue colored dog.\\
A green colored banana.\\
A red colored banana.\\
A black colored banana.\\
A white colored sandwich.\\
A black colored sandwich.\\
An orange colored sandwich.\\
A pink colored giraffe.\\
A yellow colored giraffe.\\
A brown colored giraffe.\\
A red car and a white sheep.\\
A blue bird and a brown bear.\\
A green apple and a black backpack.\\
A green cup and a blue cell phone.\\
A yellow book and a red vase.\\
A white car and a red sheep.\\
A brown bird and a blue bear.\\
A black apple and a green backpack.\\
A blue cup and a green cell phone.\\
A red book and a yellow vase.
\end{minipage}
&
\begin{minipage}[t]{\linewidth}\raggedright
A white colored dog.\\
A purple colored dog.\\
A yellow colored dog.\\
A green colored apple.\\
A black colored apple.\\
A blue colored apple.\\
A purple colored apple.\\
A pink colored banana.\\
A pink colored cup.\\
A purple colored sandwich.\\
A green colored giraffe.\\
A blue colored backpack.\\
A blue car and a pink sheep.\\
A red apple and a purple backpack.\\
A pink car and a yellow sheep.\\
A black cup and a yellow cell phone.\\
A blue car and a red giraffe.\\
A yellow bird and a purple sheep.\\
A pink car and a green bear.\\
A purple hat and a black clock.\\
A black chair and a red table.\\
A red car and a blue bird.\\
A green car and a yellow banana.\\
A pink vase and a red apple.\\
A blue book and an orange colored sandwich.
\end{minipage}
\\
\hline
\end{tabular}
\caption{Training and test prompt datasets used in the experiments.}
\label{tab:prompts}
\end{table}
\subsection{Training and Evaluation Details}
For the RL fine-tuning, we followed the DPOK \cite{fan2023dpok} method. For each diffusion model, we used one H100 GPU for fine-tuning, where the batch size was set to $2$ and gradient accumulation was $12$. We used the same setup \cite{cheng2025diffusion} that the learning rate of AdamW was set to $1\times 10^{-5}$ and the LoRA rank was set to $4$. For the policy update, we trained the model for $8000$ epochs, and other settings were the same as \cite{fan2023dpok}: we set the clipping ratio to $1\times 10^{-4}$ and ran $5$ policy gradient steps and value function updates each iteration. The $\eta$ of the denoising scheduler was set to $1.0$ for the VILA model and $0.8$ for the ImageReward model. For CoDe, we set the number of particles for the search to $20$ and the lookahead steps to $5$.

\subsection{Generated Images}
In the following Figs. \ref{fig:e1}-\ref{fig:e4}, we provide some generated image samples from SD, RS and our method with different $w$. The left image is generated by Stable Diffusion using the prompt shown in the lower-left corner. The top row shows images generated by our proposed MSDDA, while the bottom row shows images generated by RS.
\begin{figure}
  \centering
\includegraphics[width=0.6\textwidth]{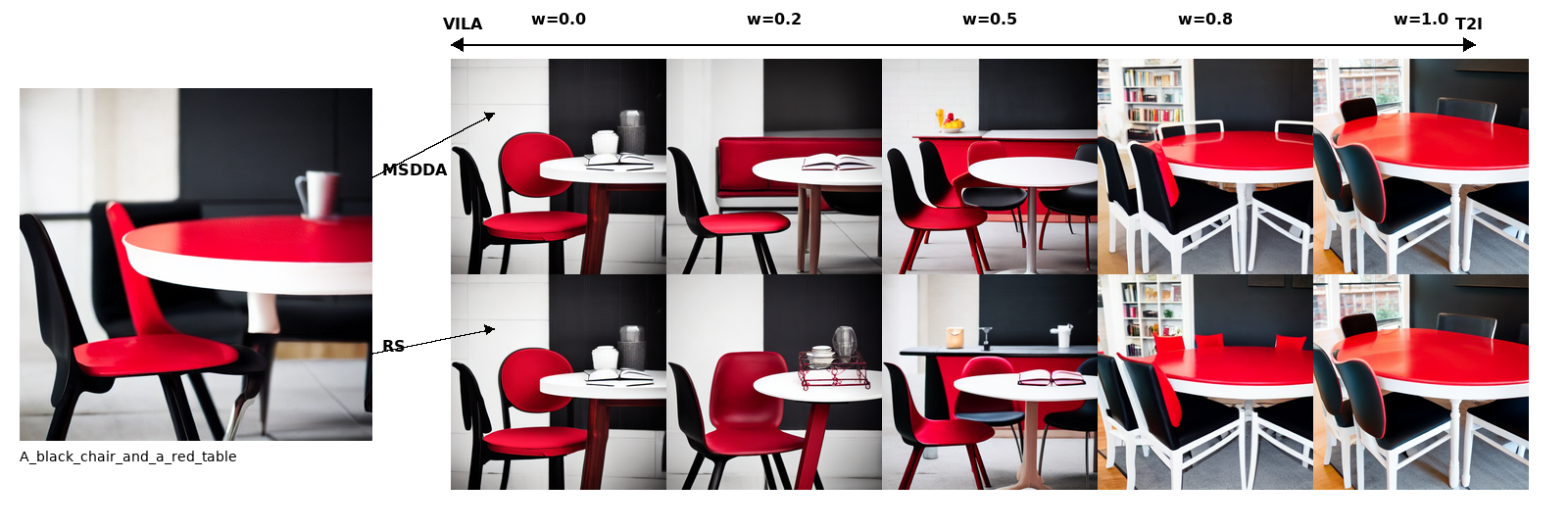}
\includegraphics[width=0.6\textwidth]{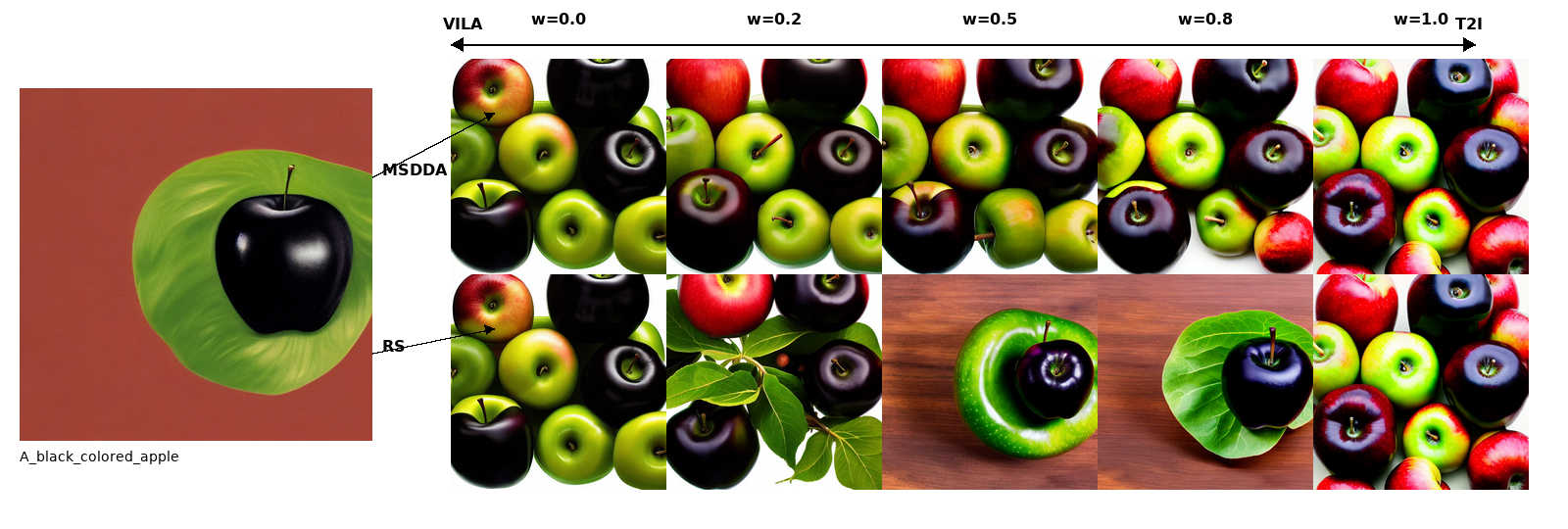}
\includegraphics[width=0.6\textwidth]{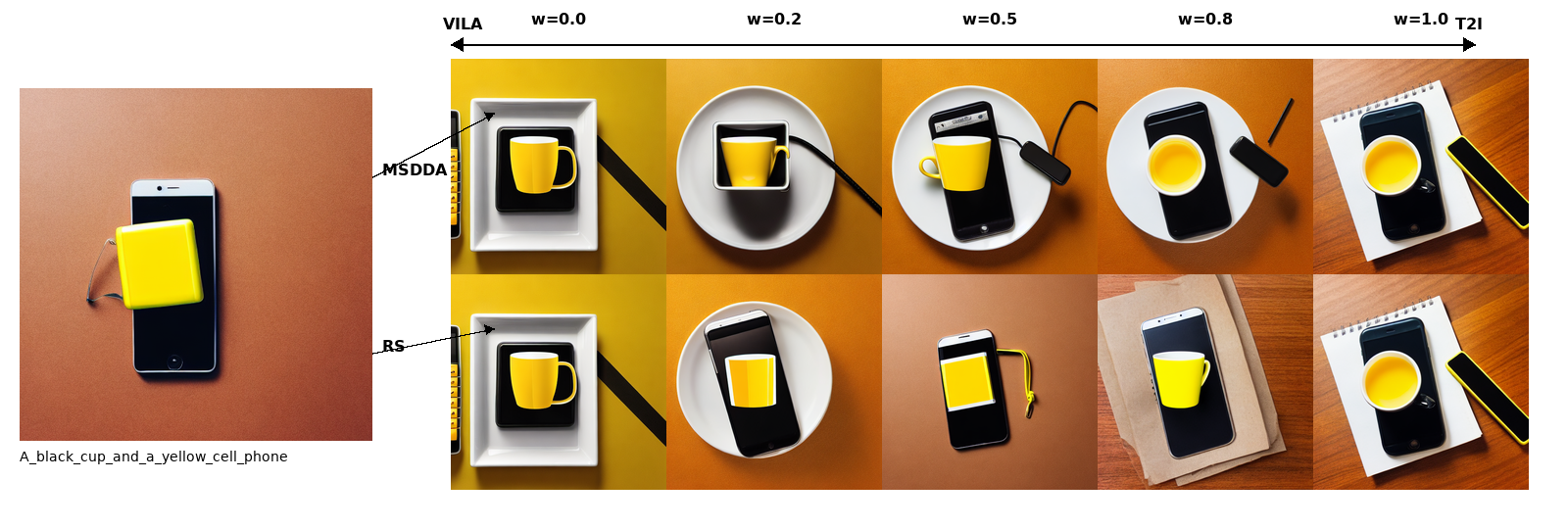}
\includegraphics[width=0.6\textwidth]{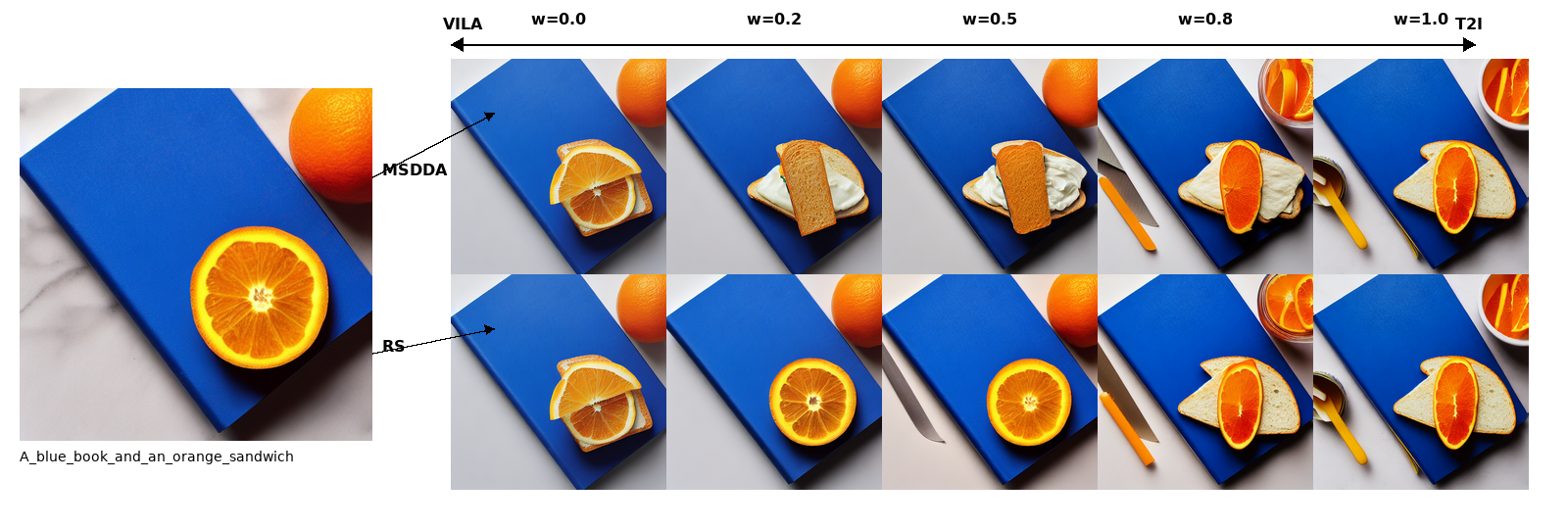}
  \caption{Image samples of our proposed method and RS. }\label{fig:e1}
\end{figure}

\begin{figure}
  \centering
\includegraphics[width=0.6\textwidth]{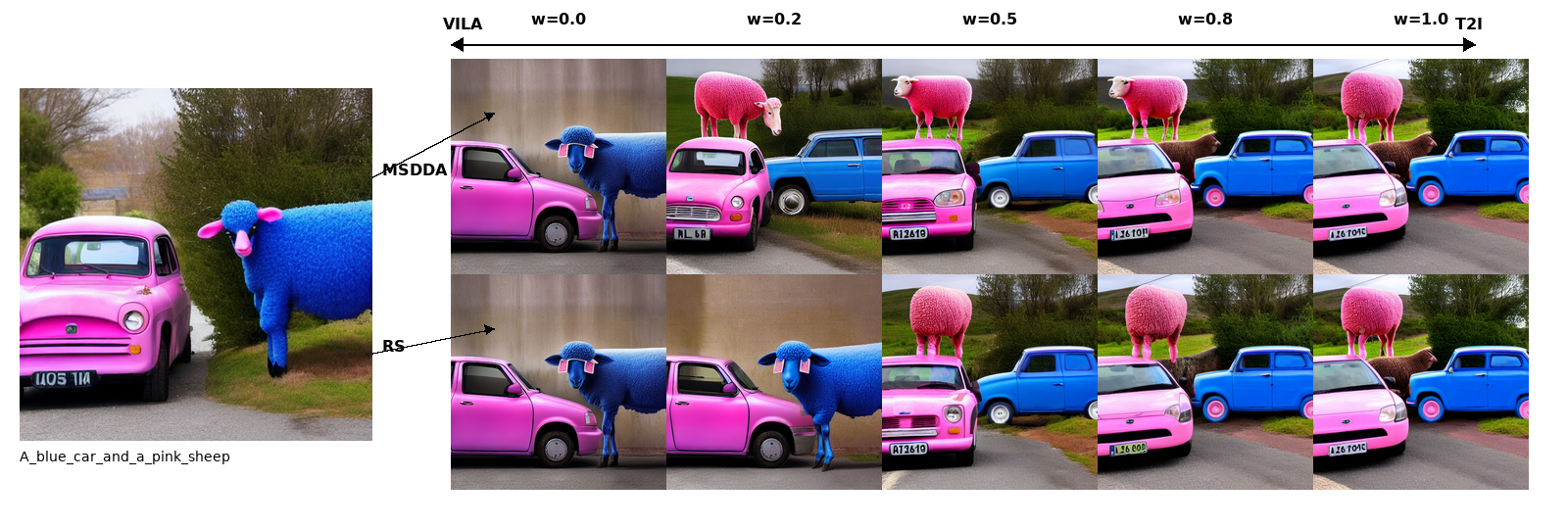}
\includegraphics[width=0.6\textwidth]{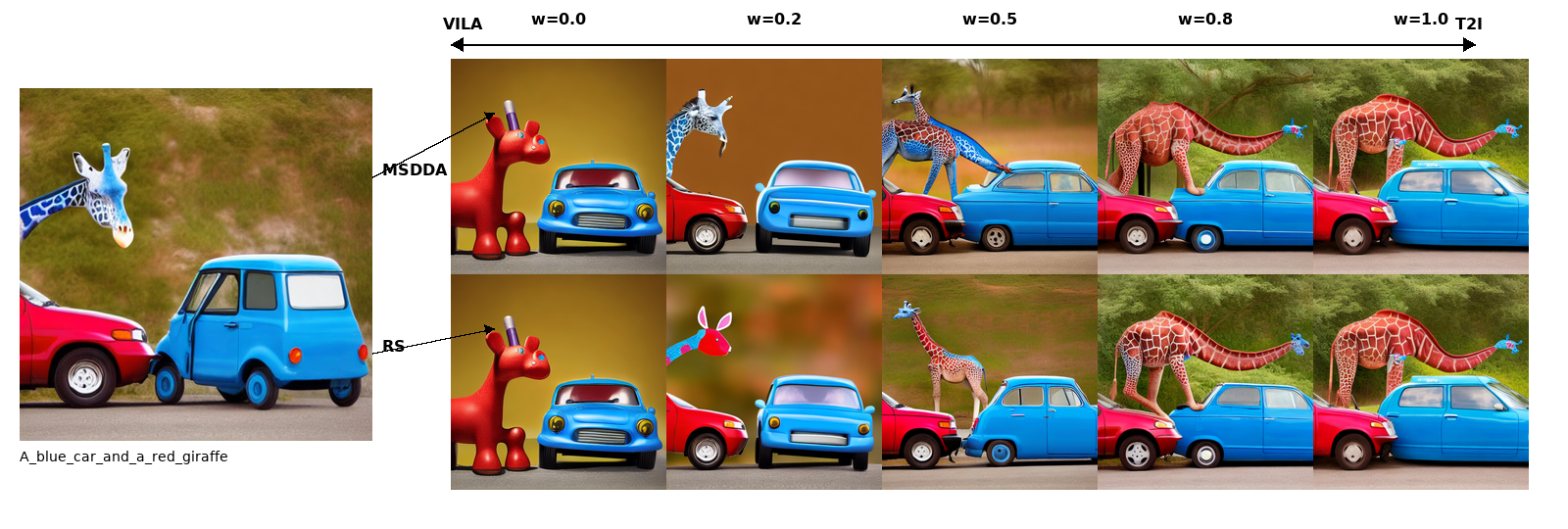}
\includegraphics[width=0.6\textwidth]{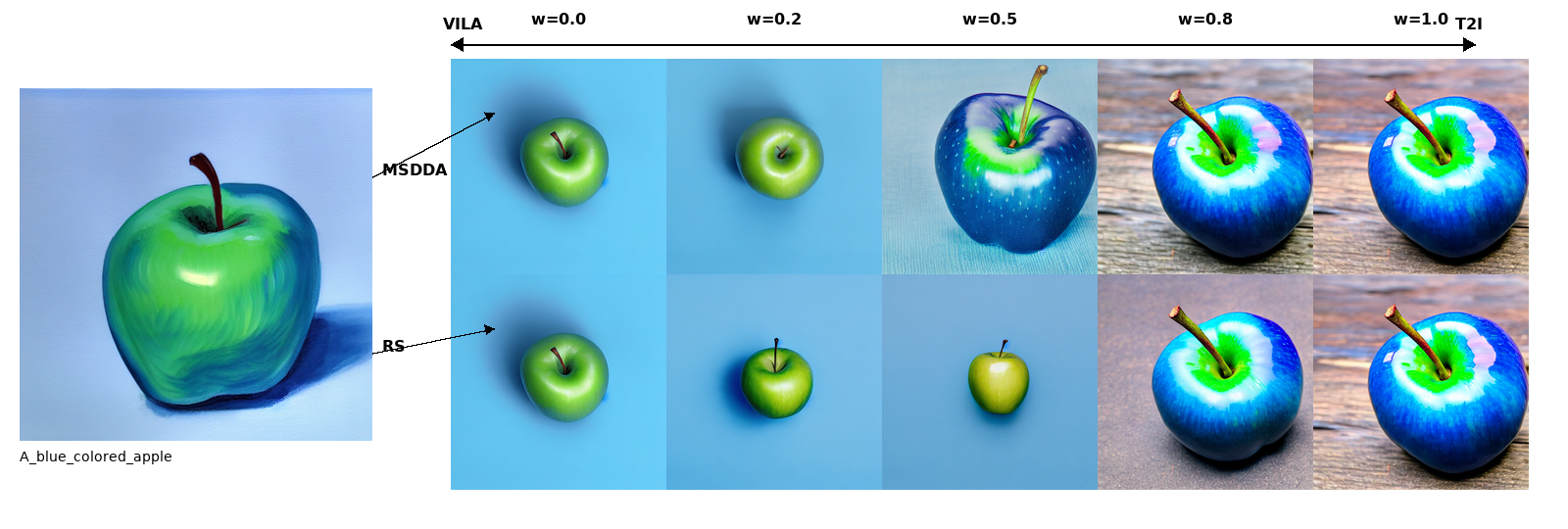}
\includegraphics[width=0.6\textwidth]{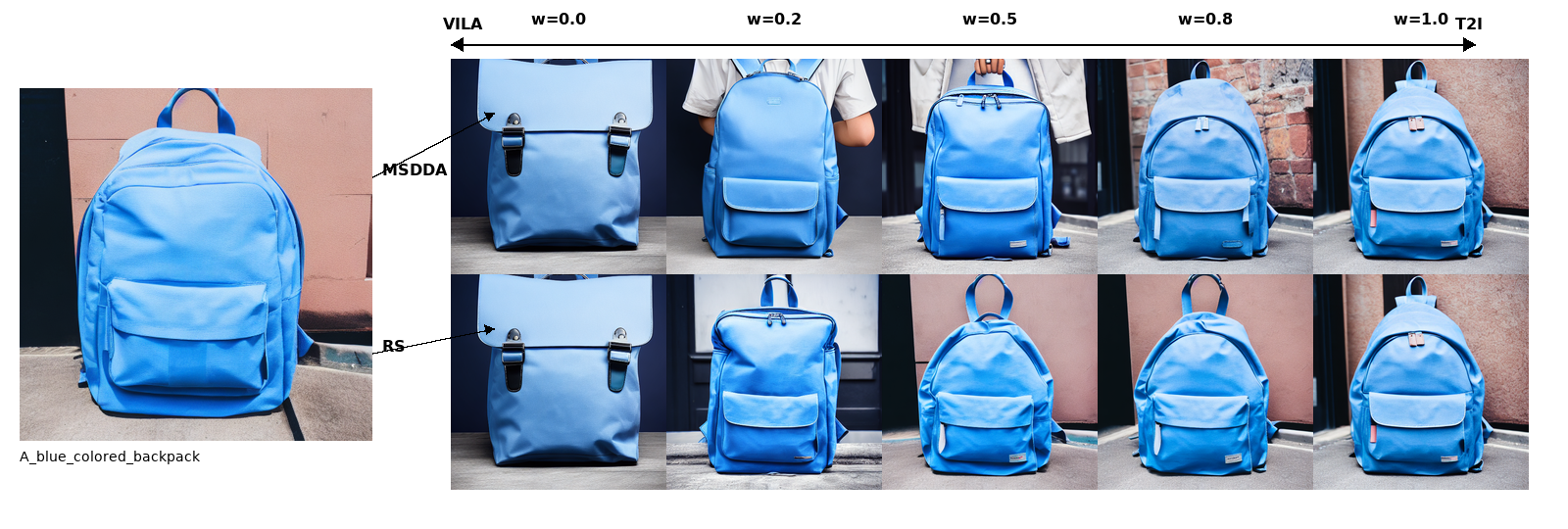}
\includegraphics[width=0.6\textwidth]{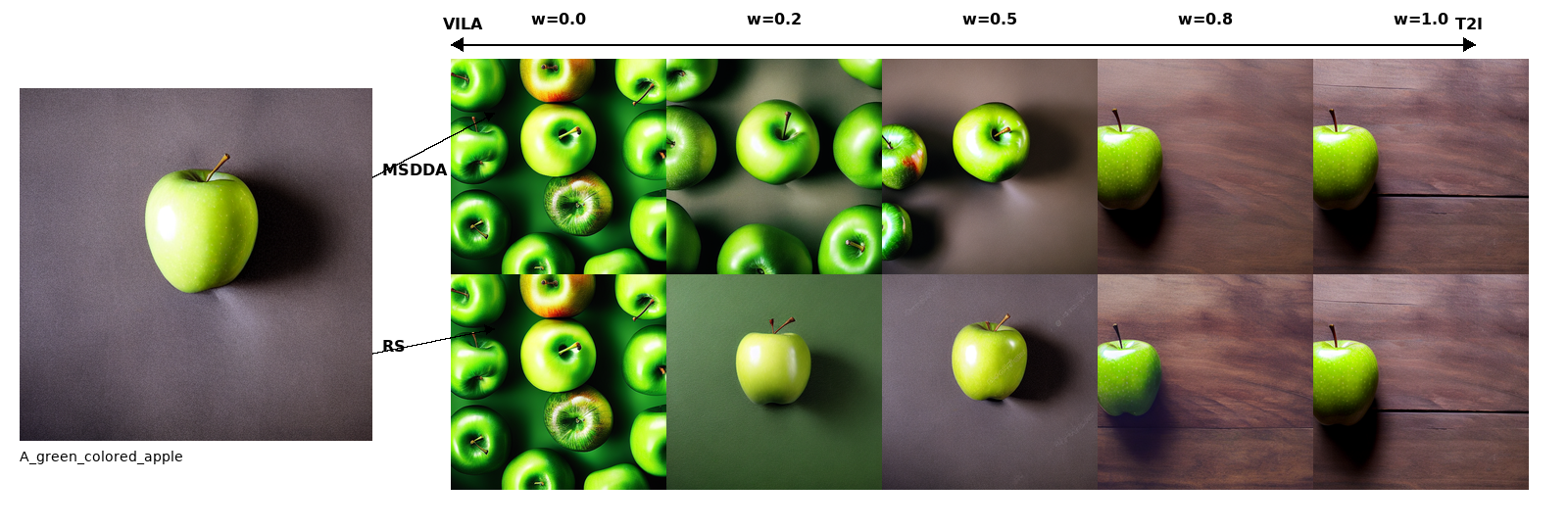}
  \caption{Image samples of our proposed method and RS. }\label{fig:e2}
\end{figure}

\begin{figure}
  \centering
\includegraphics[width=0.6\textwidth]{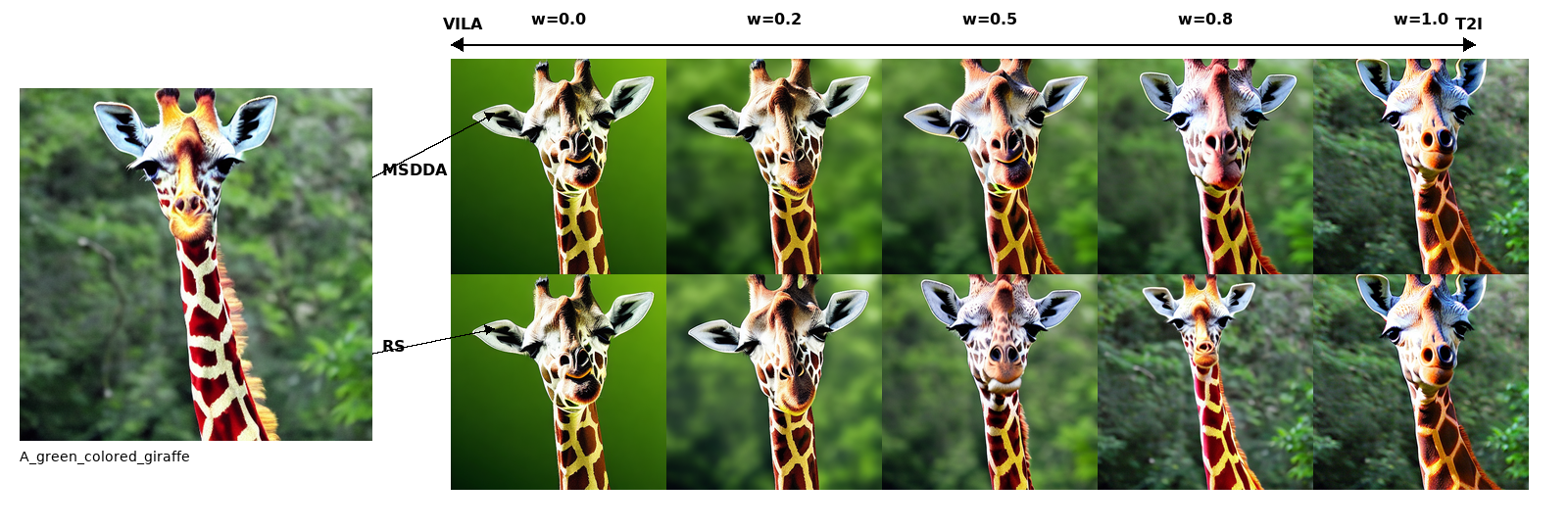}
\includegraphics[width=0.6\textwidth]{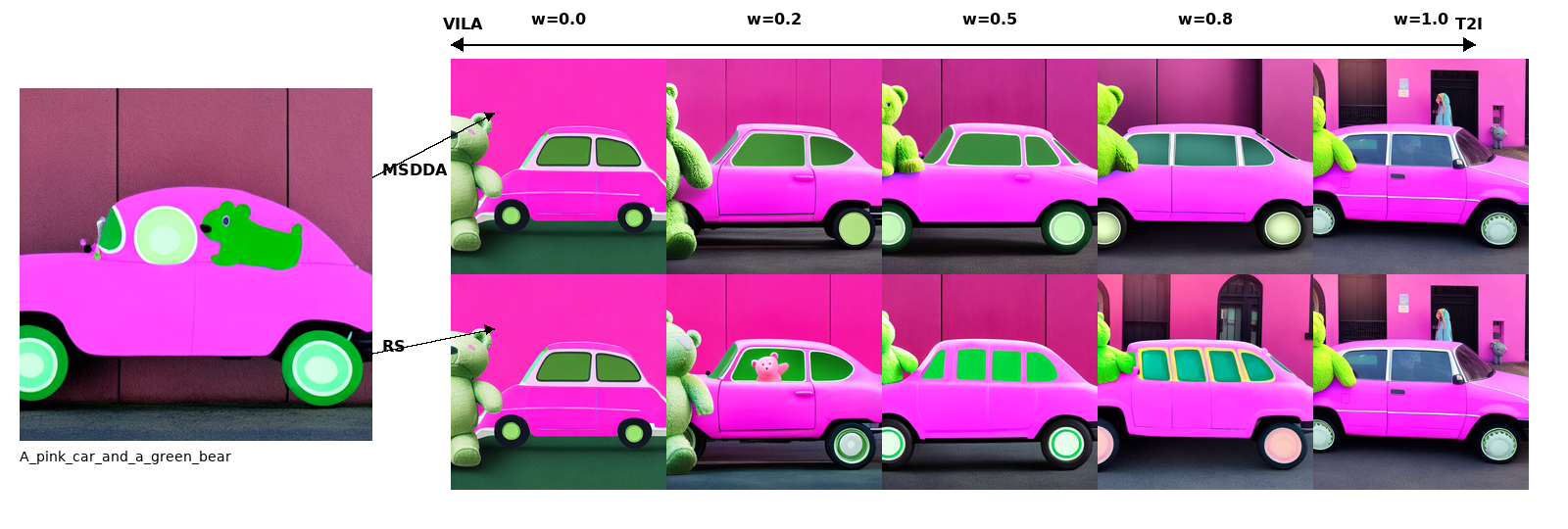}
\includegraphics[width=0.6\textwidth]{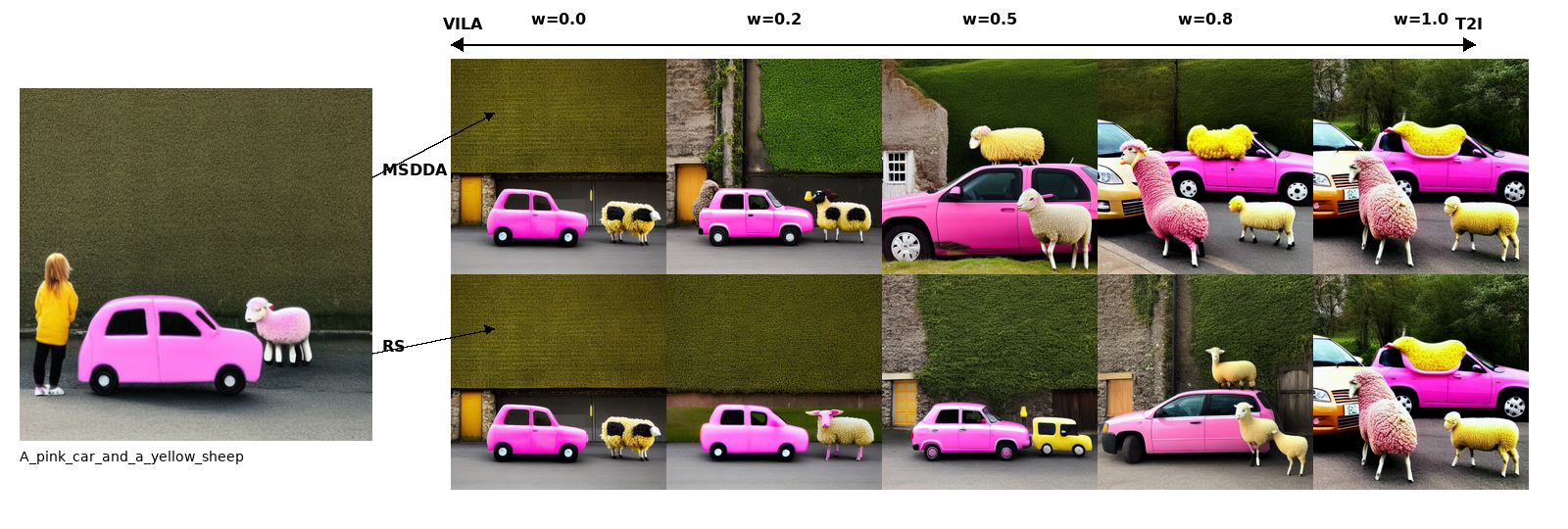}
\includegraphics[width=0.6\textwidth]{A_pink_colored_banana3.png}
\includegraphics[width=0.6\textwidth]{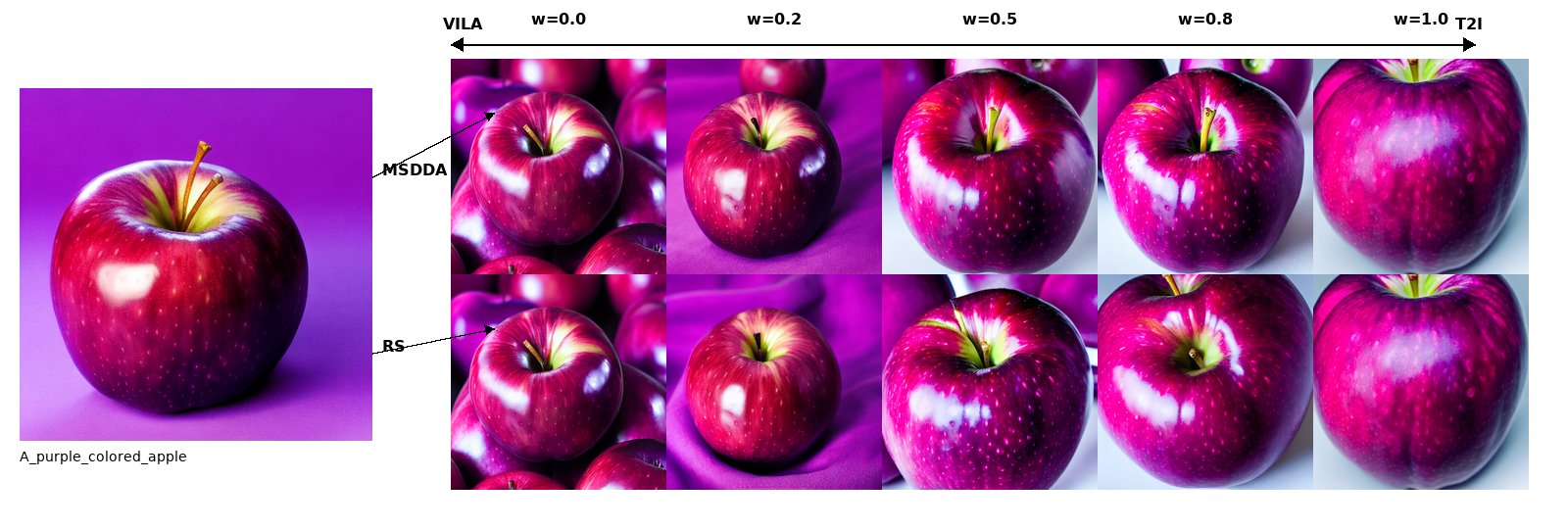}
  \caption{Image samples of our proposed method and RS. }\label{fig:e3}
\end{figure}

\begin{figure}
  \centering
\includegraphics[width=0.6\textwidth]{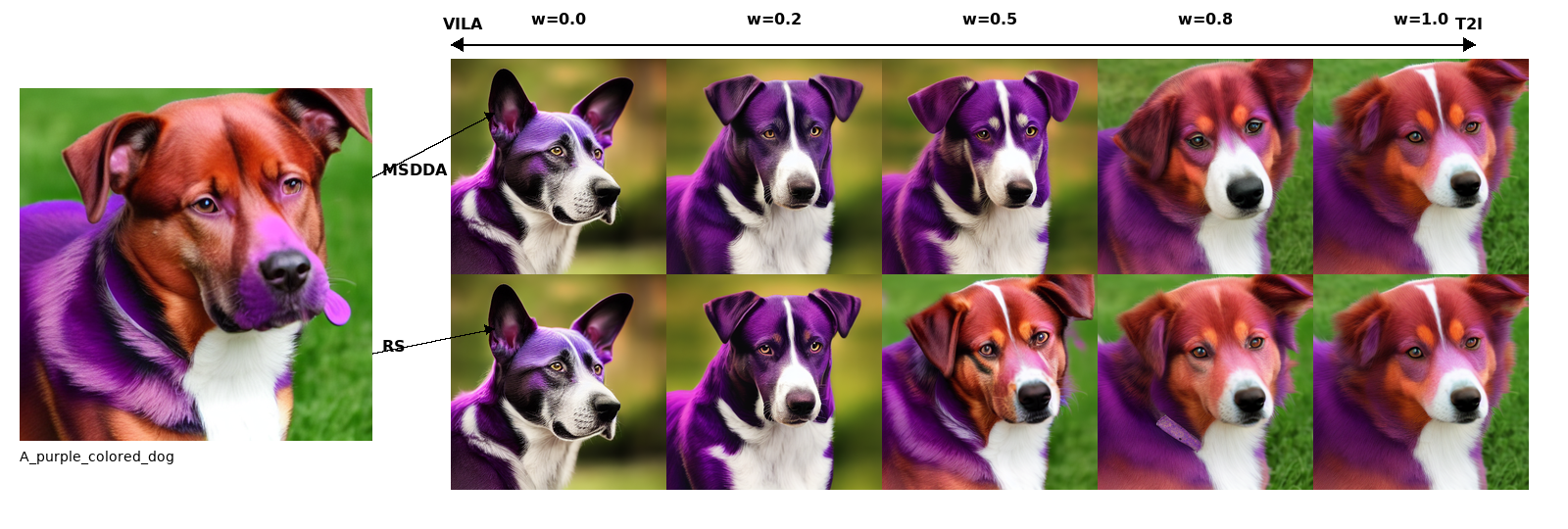}
\includegraphics[width=0.6\textwidth]{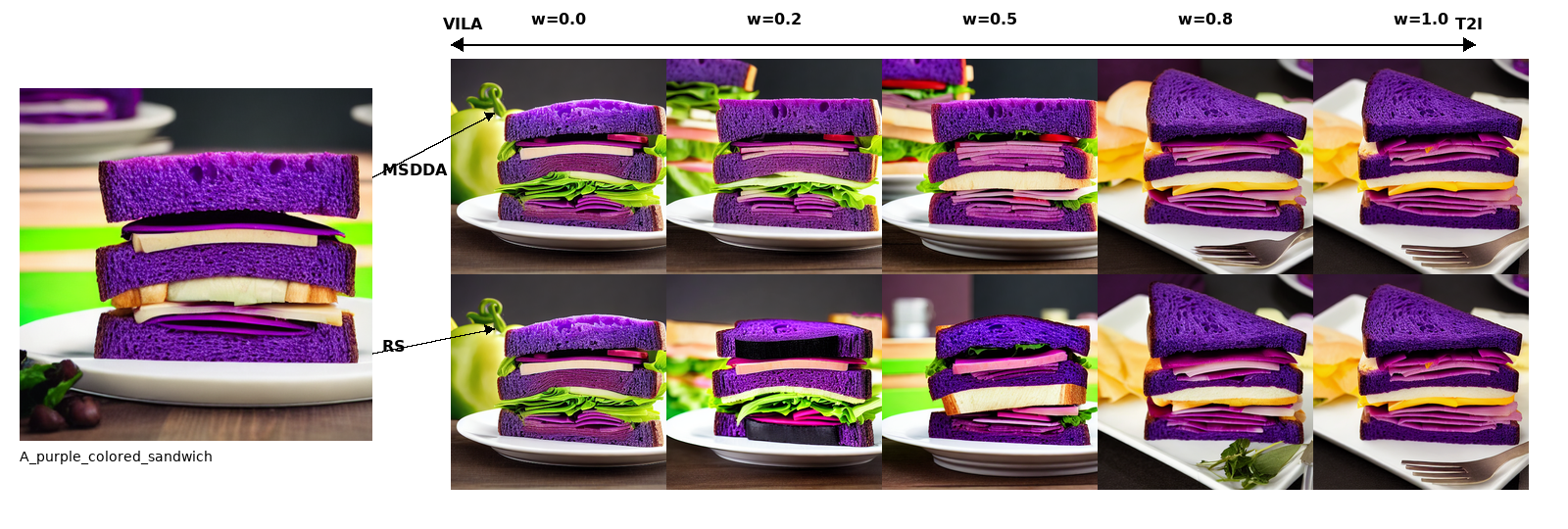}
\includegraphics[width=0.6\textwidth]{A_red_apple_and_a_purple_backpack4.png}
\includegraphics[width=0.6\textwidth]{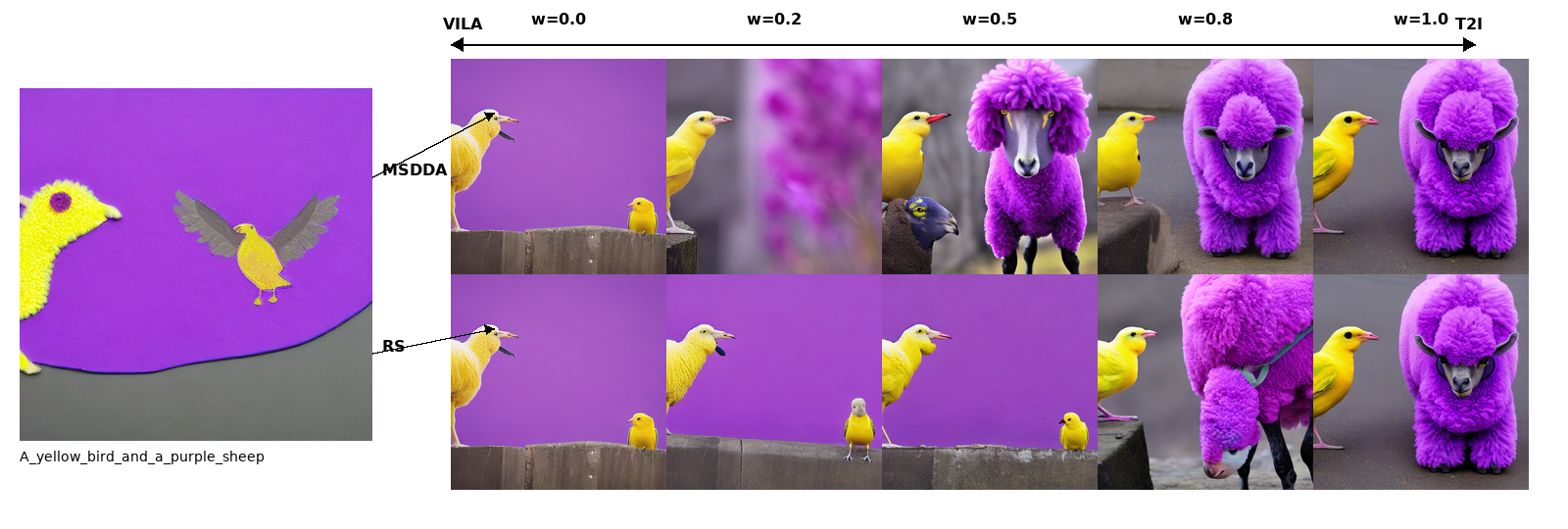}
\includegraphics[width=0.6\textwidth]{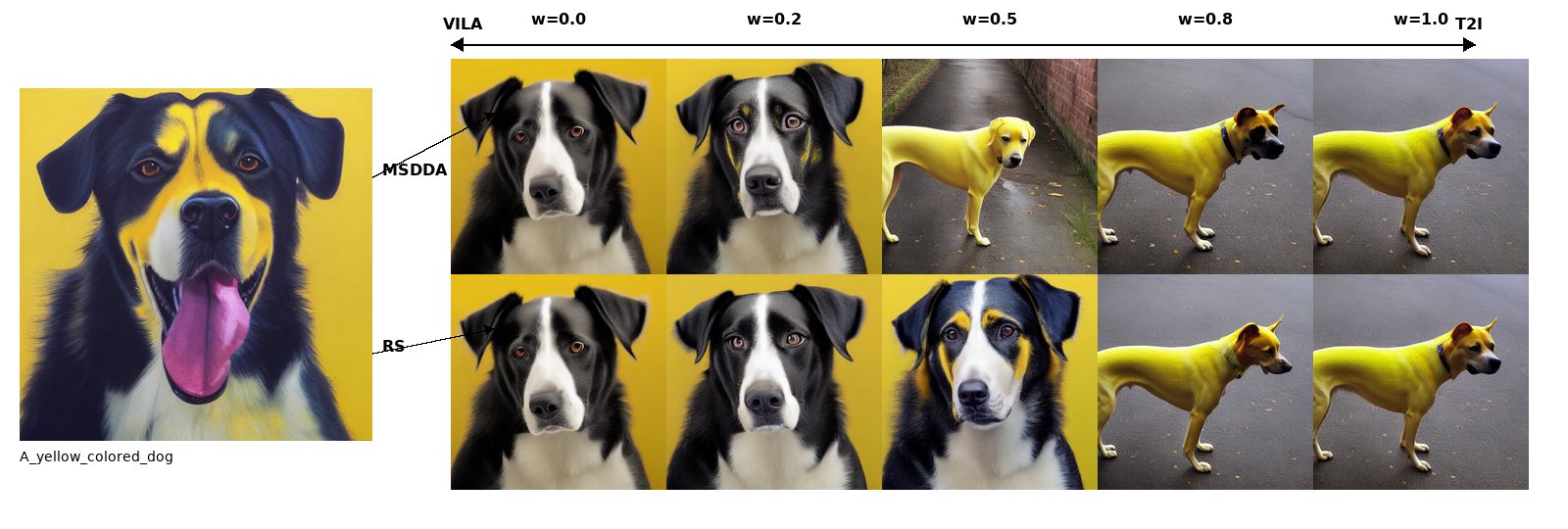}
  \caption{Image samples of our proposed method and RS.}\label{fig:e4}
\end{figure}

\end{document}